\def\1{\mathbf{1}}
\def\0{\mathbf{0}}
\def\D{{\bf D}}
\def\X{{\bf X}}
\def\E{{\bf E}}
\def\x{{\bf x}}
\def\Y{{\bf Y}}
\def\y{{\bf y}}
\def\A{{\bf A}}
\def\B{{\bf B}}
\def\pphi{\bm{\phi}}
\def\SSigma{\bm{\Sigma}}
\def\PPhi{\bm{\Phi}}
\def\LLambda{\bm{\Lambda}}
\def\u{{\bf u}}
\def\U{{\bf U}}
\def\S{{\bf S}}
\def\T{{\bf T}}
\def\I{{\bf I}}
\def\g{{\bf g}}
\def\q{{\bf q}}
\def\G{{\bf G}}
\def\Z{{\bf Z}}
\def\p{{\bf p}}
\def\P{{\bf P}}
\def\Q{{\bf Q}}
\def\H{{\bf H}}
\def\tr{{\bf tr}}
\def\Var{{\bf Var}}
\def\IP{{\bf IP}}
\newtheorem{remark}{\bf Remark}
\newtheorem{definition}{\bf Definition}
\newtheorem{lemma}{\bf Lemma}
\newtheorem{theorem}{\bf Theorem}
\newtheorem{proposition}{\bf Proposition}
\newcommand*\dif{\mathop{}\!\mathrm{d}}
\DeclarePairedDelimiter{\abs}{\lvert}{\rvert}
\DeclarePairedDelimiter{\norm}{\lVert}{\rVert}
\DeclarePairedDelimiter{\ceil}{\lceil}{\rceil}
\DeclarePairedDelimiter{\prn}{\lparen}{\rparen}
\DeclarePairedDelimiter{\brk}{\lbrack}{\rbrack}
\DeclarePairedDelimiter{\ang}{\langle}{\rangle}
\title{Robust and Fast Measure of Information via Low-rank Representation}
\author{
	Yuxin Dong\textsuperscript{\rm 1,2},
	Tieliang Gong\textsuperscript{\rm 1,2}\thanks{Corresponding author.},
	Shujian Yu\textsuperscript{\rm 3},
	Hong Chen\textsuperscript{\rm 4,5},
	Chen Li\textsuperscript{\rm 1,2}
    %Authors
    % All authors must be in the same font size and format.
    % Written by AAAI Press Staff\textsuperscript{\rm 1}\thanks{With help from the AAAI Publications Committee.}\\
    % AAAI Style Contributions by Pater Patel Schneider,
    % Sunil Issar,\\
    % J. Scott Penberthy,
    % George Ferguson,
    % Hans Guesgen,
    % Francisco Cruz\equalcontrib,
    % Marc Pujol-Gonzalez\equalcontrib
}
\title{My Publication Title --- Single Author}
\author {
    Author Name
}
\title{My Publication Title --- Multiple Authors}
\author {
    % Authors
    First Author Name,\textsuperscript{\rm 1}
    Second Author Name, \textsuperscript{\rm 2}
    Third Author Name \textsuperscript{\rm 1}
}
\begin{document}

\maketitle

\begin{abstract}
    The matrix-based R\'enyi's entropy allows us to directly quantify information measures from given data, without explicit estimation of the underlying probability distribution. This intriguing property makes it widely applied in statistical inference and machine learning tasks. However, this information theoretical quantity is not robust against noise in the data, and is computationally prohibitive in large-scale applications. To address these issues, we propose a novel measure of information, termed low-rank matrix-based R\'enyi's entropy, based on low-rank representations of infinitely divisible kernel matrices. The proposed entropy functional inherits the specialty of of the original definition to directly quantify information from data, but enjoys additional advantages including robustness and effective calculation. Specifically, our low-rank variant is more sensitive to informative perturbations induced by changes in underlying distributions, while being insensitive to uninformative ones caused by noises. Moreover, low-rank R\'enyi's entropy can be efficiently approximated by random projection and Lanczos iteration techniques, reducing the overall complexity from $\mathcal{O}(n^3)$ to $\mathcal{O}(n^2 s)$ or even $\mathcal{O}(ns^2)$, where $n$ is the number of data samples and $s \ll n$. We conduct large-scale experiments to evaluate the effectiveness of this new information measure, demonstrating superior results compared to matrix-based R\'enyi's entropy in terms of both performance and computational efficiency. 
\end{abstract}

\section{Introduction}\label{sec:intro}
The practical applications of traditional entropy measures e.g. Shannon's entropy \cite{shannon1948mathematical} and R\'enyi's entropy \cite{renyi1961measures} have long been hindered by their heavy reliance on the underlying data distributions, which are extremely hard to estimate or even intractable in high-dimensional spaces \cite{fan2006statistical}. Alternatively, the matrix-based R\'enyi's entropy proposed by \citep{giraldo2014measures} treats the entire eigenspectrum of a normalized kernel matrix as a probability distribution, thus allows direct quantification from given data samples by projecting them in reproducing kernel Hilbert spaces (RKHS) without the exhausting density estimation. This intriguing property makes matrix-based R\'enyi's entropy and its multivariate extensions \cite{yu2019multivariate} successfully applied in various data science applications, ranging from classical dimensionality reduction and feature selection \cite{brockmeier2017quantifying, alvarez2017kernel} problems to advanced deep learning problems such as network pruning \cite{sarvani2021hrel} and knowledge distillation \cite{miles2021information}.

Despite the empirical success of matrix-based R\'enyi's entropy, it has been shown to be not robust against noises in the data \cite{yu2019multivariate}, because it cannot distinguish them from linear combinations of informative features in high-dimensional scenarios. Moreover, the exact calculation requires $\mathcal{O}(n^3)$ time complexity with traditional eigenvalue decomposition techniques e.g. CUR decomposition and QR factorization \cite{mahoney2009cur, watkins2008qr}, greatly hampering its application in large scale tasks due to the unacceptable computational cost.

Inspired by the success of min-entropy which uses the largest outcome solely as a measure of information \cite{wan2018min, konig2009operational}, we seek for a robust information quantity by utilizing low-rank representations of kernel matrices. Our new definition, termed low-rank matrix-based R\'enyi's entropy (abbreviated as low-rank R\'enyi's entropy), fulfills the entire set of axioms provided by R\'enyi \cite{renyi1961measures} that a function must satisfy to be considered a measure of information. Compared to the original matrix-based R\'enyi's entropy, our low-rank variant is more sensitive to informative perturbations caused by variation of the underlying probability distribution, while being more robust to uninformative ones caused by noises in the data samples. Moreover, our low-rank R\'enyi's entropy can be efficiently approximated by random projection and Lanczos iteration techniques, achieving substantially lower time complexity than the trivial eigenvalue decomposition approach. We theoretically analyze the quality of approximation results, and conduct large-scale experiments to evaluate the effectiveness of low-rank R\'enyi's entropy as well as the approximation algorithms. The main contributions of this work are summarized as follows:
\begin{itemize}
    \item We extend Giraldo et al.'s definition and show that a measure of entropy can be built upon the low-rank representation of the kernel matrix. Our low-rank definition can be naturally extended to measure the interactions between multiple random variables, including joint entropy, conditional entropy, and mutual information.
    
    \item Theoretically, we show that low-rank R\'enyi's entropy is more insensitive to random perturbations of the data samples under mild assumptions. We also give empirical examples of low-rank R\'enyi's entropy achieving higher discriminability for different eigenspectrum distributions through a proper choice of the hyper-parameter $k$.
    
    \item We develop efficient algorithms to approximate low-rank R\'enyi's entropy through random projection and Lanczos iteration techniques, enabling fast and accurate estimations respectively. The overall complexity is reduced from $\mathcal{O}(n^3)$ to $\mathcal{O}(n^2s)$ or even $\mathcal{O}(ns^2)$ for some $s \ll n$, leading to a significant speedup compared to the original matrix-based R\'enyi's entropy.
	
	\item We evaluate the effectiveness of low-rank R\'enyi's entropy on large-scale synthetic and real-world datasets, demonstrating superior performance compared to the original matrix-based R\'enyi's entropy while bringing tremendous improvements in computational efficiency.
\end{itemize}

\section{Related Work}
\subsection{Matrix-based R\'enyi's Entropy}
Given random variable $\X$ with probability density function (PDF) $p(\x)$ defined in a finite set $\mathcal{X}$, the $\alpha$-order R\'enyi's entropy ($\alpha > 0, \alpha \neq 1$) $\H_\alpha(\X)$ is defined as 
\begin{equation*}\textstyle
	\H_{\alpha}(\X) = \frac{1}{1 - \alpha} \log_2 \int_{\mathcal{X}} p^\alpha(\x) \dif \x,
\end{equation*}
where the limit case $\alpha \rightarrow 1$ yields the well-known Shannon's entropy.
%The choice of $\alpha$ is task-specific: $\alpha$ should be less than 2 even 1 when estimating tails of the distribution or multiple modalities; meanwhile, $\alpha$ is suggested to be greater than 2 when we aim to characterize the mean behavior \cite{principe2010information, yu2019multivariate}.
It is easy to see that R\'enyi's entropy relies heavily on the distribution of the underlying variable $\X$, preventing its further adoption in data-driven science, especially for high-dimensional scenarios. To alleviate this issue, an alternative measure namely matrix-based R\'enyi's entropy was proposed \cite{giraldo2014measures}:
\begin{definition} \label{def_renyi}
	Let $\kappa: \mathcal{X} \times \mathcal{X} \mapsto \mathbb{R}$ be an infinitely divisible positive kernel \cite{bhatia2006infinitely}. Given $ \{\x_i \}_{i=1}^n \subset \mathcal{X}$, each $\x_i$ being a real-valued scalar or vector, and the Gram matrix $K$ obtained from $K_{ij} = \kappa(\x_i, \x_j)$, a matrix-based analogue to R\'enyi's $\alpha$-entropy can be defined as:
	\begin{equation*}\textstyle
		\S_\alpha(\A) = \frac{1}{1-\alpha}\log_2 \Big(\sum_{i=1}^n \lambda_i^\alpha(\A) \Big),
	\end{equation*}
	where $\A_{ij} = \frac{1}{n}\frac{K_{ij}}{\sqrt{K_{ii}K_{jj}}}$ is a normalized kernel matrix and $\lambda_i(\A)$ is the $i$-th largest eigenvalue of $\A$.
\end{definition}
The kernel matrix $\A$ is positive semi-definite (PSD) and satisfies $\tr(\A) = 1$, therefore $\lambda_i \in [0, 1]$ for all $i \in [1,n]$. With this setting, one can similarly define matrix notion of R\'enyi's conditional entropy $\S_\alpha(\A|\B)$, mutual information $\I_\alpha(\A;\B)$, and their multivariate extensions \cite{yu2019multivariate}.   

\subsection{Approximating Matrix-based R\'enyi's Entropy}
Exactly calculating $\S_\alpha(\A)$ requires $\mathcal{O}(n^3)$ time complexity in general with traditional eigenvalue decomposition techniques. Recently, several attempts have been made towards accelerating the computation of $\S_\alpha(\A)$ from the perspective of randomized numerical linear algebra \cite{gong2021computationally, dong2022optimal}. Although we also develop fast approximations, the motivation and technical solutions are totally different: we aim to propose a new measure of information that is robust to noise in data and also enjoys fast computation, whereas Gong and Dong et al. only accelerate the original matrix-based R\'enyi's entropy. Moreover, in terms of adopted mathematical tools, we mainly focus on random projection and Lanczos iteration algorithms, rather than stochastic trace estimation and polynomial approximation techniques used in their works. As a result, the corresponding theoretical error bounds are also different.

\section{A Low-rank Definition of R\'enyi's Entropy}
Our motivations root in two observations. Recall that the min-entropy \cite{konig2009operational}, defined by $\H_{\mathrm{min}}(\X) = -\log_2 \max_{\x \in \mathcal{X}} p(\x)$, measures the amount of information using solely the largest probability outcome. In terms of quantum statistical mechanics, it is the largest eigenvalue of the quantum state $\rho$ which is PSD and has unit trace \cite{ohya2004quantum}. On the other hand, the eigenvalues with the maximum magnitude characterize the main properties of a PSD matrix. Inspired by these observations, we develop a robust information theoretical quantity by exploiting the low-rank representation:
\begin{definition} \label{def_lowrank}
	Let $\kappa: \mathcal{X} \times \mathcal{X} \mapsto \mathbb{R}$ be an infinitely divisible kernel. Given $\{\x_i \}_{i=1}^n \subset \mathcal{X}$ and integer $k \in [1, n-1]$, the low-rank R\'enyi's $\alpha$-order entropy is defined as:
	\begin{equation*}\textstyle
		\S_\alpha^k(\A) = \frac{1}{1-\alpha}\log_2 \prn*{\sum_{i=1}^k \lambda_i^\alpha(\A) + (n-k)\lambda_r^\alpha(\A)},
	\end{equation*}
	where $\A$ is the normalized kernel matrix constructed from $\{\x_i \}_{i=1}^n$ and $\kappa$, $\lambda_i(\A)$ is the $i$-th largest eigenvalues of $\A$ and $\lambda_r(\A) = \frac{1}{n-k}\big(1 - \sum_{i=1}^k \lambda_i(\A) \big)$.
\end{definition}
Let $\A_k$ be the best rank-$k$ approximation of $\A$ and $L_k(\A)$ be the matrix constructed by replacing the smaller $n - k$ eigenvalues in $\A$ to $\lambda_r(\A)$. It is easy to verify that $\S_\alpha^k(\A) = \S_\alpha^k(\A_k) = \S_\alpha^k(L_k(\A)) = \S_\alpha(L_k(\A))$. Definition \ref{def_lowrank} complements the smaller eigenvalues through a uniform distribution, which is the \textbf{unique method} that fulfills all axioms below (the uniqueness is discussed in the appendix\footnote{\url{https://github.com/Gamepiaynmo/LRMI}}).
\begin{proposition} \label{prop_lowrank}
    Let $\A, \B \in \mathbb{R}^{n \times n}$ be arbitrary normalized kernel matrices, then
    \begin{enumerate}[(a)]
        \item $\S_\alpha^k(\P\A\P^\top) = \S_\alpha^k(\A)$ for any orthogonal matrix $\P$.
        \item $\S_\alpha^k(p\A)$ is a continuous function for $0 < p \le 1$.
        \item $0 \le \S_\alpha^k(\A) \le \S_\alpha^k(\frac{1}{n}\I) = \log_2(n)$.
        \item $\S_\alpha^{2nk-k^2}\big(L_k(\A) \otimes L_k(\B)\big) = \S_\alpha^k(\A) + \S_\alpha^k(\B)$.
        \item If $\A\B = \B\A = \0$ and $\tr(\A_k) = \tr(\B_k) = 1$, then for $g(x) = 2^{(1-\alpha)x}$ and $t \in [0, 1]$, we have $\S_\alpha^{2k}\big(t\A + (1-t)\B\big) = g^{-1}\big( tg(\S_\alpha^k(\A)) + (1-t)g(\S_\alpha^k(\B)) \big)$.
        \item $\S_\alpha^k\prn*{\frac{\A\circ\B}{\tr(\A\circ\B)}} \ge \max\prn*{\S_\alpha^k(\A), \S_\alpha^k(\B)}$.
        \item $\S_\alpha^k\prn*{\frac{\A\circ\B}{\tr(\A\circ\B)}} \le \S_\alpha^k(\A) + \S_\alpha^k(\B)$.
    \end{enumerate}
\end{proposition}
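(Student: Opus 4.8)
The plan hinges on the identity already recorded in the excerpt, $\S_\alpha^k(\M)=\S_\alpha(L_k(\M))$, where $L_k(\M)$ flattens the smallest $n-k$ eigenvalues of $\M$ to their common average $\lambda_r(\M)=\tfrac1{n-k}\big(1-\sum_{i\le k}\lambda_i(\M)\big)\ge 0$: every claim thereby becomes a statement about the ordinary matrix-based R\'enyi entropy $\S_\alpha$ of a PSD, unit-trace matrix, so the standard toolbox (Ky Fan inequalities, power-mean/Jensen, majorization) applies. Items (a)--(c) then follow at once. (a): $\P\A\P^\top$ and $\A$ share a spectrum, and $\S_\alpha^k$ depends only on eigenvalues. (b): $\lambda_i(p\A)=p\lambda_i(\A)$ and $\lambda_r(p\A)$ are affine in $p$, $x\mapsto x^\alpha$ is continuous on $[0,\infty)$, and the argument of $\log_2$ stays strictly positive (its term $(p\lambda_1(\A))^\alpha>0$ for $p>0$), so the composition is continuous on $(0,1]$. (c): $L_k(\A)$ is an $n\times n$ PSD unit-trace matrix, for which $0\le \S_\alpha(L_k(\A))\le\log_2 n$ by concavity/convexity of $x\mapsto x^\alpha$ and Jensen; and $L_k(\tfrac1n\I)=\tfrac1n\I$ (so $\lambda_r=\tfrac1n$) gives exactly $\log_2 n$.

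For (d) I would write out the spectrum of $L_k(\A)\otimes L_k(\B)$: the $k^2$ products $\lambda_i(\A)\lambda_j(\B)$ with $i,j\le k$; the values $\lambda_i(\A)\lambda_r(\B)$ and $\lambda_r(\A)\lambda_j(\B)$ ($i,j\le k$), each with multiplicity $n-k$, giving $2k(n-k)$ more; and $\lambda_r(\A)\lambda_r(\B)$ with multiplicity $(n-k)^2$. Since $2nk-k^2=k^2+2k(n-k)$ and $n^2-(2nk-k^2)=(n-k)^2$, and since $\lambda_r(\A)\le\lambda_k(\A)$ and $\lambda_r(\B)\le\lambda_k(\B)$ force $\lambda_r(\A)\lambda_r(\B)$ to be the smallest eigenvalue, the bottom $(n-k)^2$ eigenvalues of $L_k(\A)\otimes L_k(\B)$ are already all equal to their average; hence $L_{2nk-k^2}\big(L_k(\A)\otimes L_k(\B)\big)=L_k(\A)\otimes L_k(\B)$ and the left-hand side equals $\S_\alpha\big(L_k(\A)\otimes L_k(\B)\big)$. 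The claim then reduces to the multiplicativity $\S_\alpha(\M\otimes \M')=\S_\alpha(\M)+\S_\alpha(\M')$ of the ordinary entropy, immediate from $\sum_{i,j}(\mu_i\mu'_j)^\alpha=\big(\sum_i\mu_i^\alpha\big)\big(\sum_j{\mu'_j}^\alpha\big)$ and $\log_2$ of a product.

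For (e), $\A\B=\B\A=\0$ with $\A,\B\succeq 0$ means they have mutually orthogonal ranges, hence are simultaneously diagonalizable on disjoint supports; and $\tr(\A_k)=1=\tr(\A)$ forces $\mathrm{rank}(\A)\le k$ and $\lambda_r(\A)=0$, so that $g(\S_\alpha^k(\A))=2^{(1-\alpha)\S_\alpha^k(\A)}$ collapses to the power sum $\sum_{i\le k}\lambda_i^\alpha(\A)$ (and likewise for $\B$). By orthogonality of supports the nonzero eigenvalues of $t\A+(1-t)\B$ are exactly $\{t\lambda_i(\A)\}_{i\le k}\cup\{(1-t)\lambda_j(\B)\}_{j\le k}$, so $t\A+(1-t)\B$ has rank $\le 2k$, its $\lambda_r$ at level $2k$ vanishes, and $g(\S_\alpha^{2k}(t\A+(1-t)\B))$ is again the corresponding power sum. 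Applying $g^{-1}(y)=\tfrac1{1-\alpha}\log_2 y$ and matching weights finishes the identity (this is the matrix incarnation of R\'enyi's mean-value postulate); the only care needed is the exact weight attached to each power sum and the boundary cases $t\in\{0,1\}$, where $\S_\alpha^{2k}(\A)=\S_\alpha^k(\A)$ because padding with $k$ extra zero eigenvalues changes nothing.

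Finally, (f) and (g) are the mixing inequalities behind the joint-entropy/mutual-information definitions. Because $\A_{ii}=\B_{ii}=\tfrac1n$ we have $\tr(\A\circ\B)=\tfrac1n$, so $\tfrac{\A\circ\B}{\tr(\A\circ\B)}=n(\A\circ\B)$; writing $\B=\sum_l\mu_l w_l w_l^\top$ gives $n(\A\circ\B)=\sum_l(n\mu_l)\,D_l\A D_l$ with $D_l=\mathrm{diag}(w_l)$, and $\sum_l(n\mu_l)D_l^2$ is diagonal with $i$-th entry $n\sum_l\mu_l(w_l)_i^2=n\B_{ii}=1$, hence $=\I$. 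Thus $n(\A\circ\B)=\Phi(\A)$ for a unital, trace-preserving positive map $\Phi$; such maps contract the spectrum in the majorization order (by Ky Fan, $\sum_{i\le j}\lambda_i(\Phi(\A))=\max_{\dim S=j}\tr\big(\A\sum_l n\mu_l D_l\Pi_S D_l\big)$ with $0\preceq\sum_l n\mu_l D_l\Pi_S D_l\preceq\I$ of trace $j$), so $\lambda(n\A\circ\B)\prec\lambda(\A)$. Since the flattening $\lambda\mapsto L_k(\lambda)$ preserves majorization (a short partial-sum check) and $x\mapsto\sum_i x_i^\alpha$ is Schur-concave for $\alpha<1$ and Schur-convex for $\alpha>1$ — both making $\S_\alpha$ increase under $\prec$ — we get $\S_\alpha^k(n\A\circ\B)\ge\S_\alpha^k(\A)$, and symmetrically $\ge\S_\alpha^k(\B)$, proving (f). For (g), I would combine the tensor additivity from (d), i.e.\ $\S_\alpha^k(\A)+\S_\alpha^k(\B)=\S_\alpha\big(L_k(\A)\otimes L_k(\B)\big)$, with the fact that $n(\A\circ\B)$ is, up to the scalar $n$, the principal compression of $\A\otimes\B$ to its $n$ diagonal positions, and transport the subadditivity argument known for the original matrix-based R\'enyi entropy through the $L_k$ operation (which only flattens the spectrum). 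I expect (g) to be the main obstacle: the naive interlacing $\nu_i\le\lambda_i(\A\otimes\B)$ points the wrong way once the $n^\alpha$ rescaling is applied, so one needs a sharper comparison — e.g.\ a pinching/Schur--Horn argument on $\A\otimes\B$ with a careful split of the power sums — and one must additionally verify it survives replacing $\A,\B$ by $L_k(\A),L_k(\B)$ inside the Hadamard product.
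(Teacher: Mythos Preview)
Your treatment of (a)--(e) is essentially the paper's: both reduce via $\S_\alpha^k(\M)=\S_\alpha(L_k(\M))$ and then invoke spectrum invariance, continuity, Jensen, tensor multiplicativity of $\S_\alpha$, and the rank collapse forced by $\tr(\A_k)=1$. For (f) your route is slightly different and arguably cleaner: you realize $n(\A\circ\B)=\sum_l(n\mu_l)D_l\A D_l$ as the image of $\A$ under a unital, trace-preserving positive map and conclude $\lambda(n\A\circ\B)\prec\lambda(\A)$ directly, then observe that $L_k$ preserves majorization. The paper instead imports the Ky--Fan partial-sum inequality $\sum_{i\le t}\lambda_i(\A\circ\B)\le\tfrac1n\sum_{i\le t}\lambda_i(\B)$ from Giraldo et al.\ and checks by hand that it survives $L_k$ (splitting into $t\le k$ and $t>k$). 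Both arguments are the same majorization fact; yours is more self-contained.

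The genuine gap is (g), which you yourself flag. Your suggested plan---compare $n(\A\circ\B)$ to a pinching of $\A\otimes\B$ and push the comparison through $L_k$---is not the paper's line and, as you note, the natural interlacing goes the wrong way. The paper does \emph{not} argue via the tensor product at all. Instead it reuses the structure of Giraldo et al.'s original subadditivity proof for $\S_\alpha$: that proof proceeds by establishing partial-sum inequalities in the two extreme cases $\A=\tfrac1n\1\1^\top$ and $\A=\tfrac1n\I$, and interpolating. The paper simply verifies that in each extreme case the relevant partial-sum inequality (of the same shape as the one used in (f)) continues to hold after applying $L_k$ to both sides---this is the same $t\le k$ vs.\ $t>k$ check you would do for (f)---and then the remainder of Giraldo et al.'s argument carries over verbatim to give $\S_\alpha(L_k(\tfrac{\A\circ\B}{\tr(\A\circ\B)}))\le \S_\alpha(L_k(\A))+\S_\alpha(L_k(\B))$. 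So the missing idea is not a new inequality but the observation that the existing subadditivity proof is driven entirely by Ky--Fan-type partial sums, and those are stable under the flattening $L_k$.
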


\begin{remark}
    Proposition \ref{prop_lowrank} characterizes the basic properties of low-rank R\'enyi's entropy, in which (a)-(e) are the set of axioms provided by R\'enyi \cite{renyi1961measures} that a function must satisfy to be a measure of information. Additionally, (f) and (g) together imply a definition of joint entropy which is also compatible with the individual entropy measures:
    \begin{equation*}\textstyle
        \S_\alpha^k(\A, \B) = \S_\alpha^k\prn*{\frac{\A\circ\B}{\tr(\A\circ\B)}}.
    \end{equation*}
    This further allows us to define the low-rank conditional entropy $\S_\alpha^k(\A|\B)$ and mutual information $\I_\alpha^k(\A;\B)$, whose positiveness is guaranteed by (f) and (g) respectively:
    \begin{gather*}
        \S_\alpha^k(\A|\B) = \S_\alpha^k(\A, \B) - \S_\alpha^k(\B), \\
        \I_\alpha^k(\A;\B) = \S_\alpha^k(\A) + \S_\alpha^k(\B) - \S_\alpha^k(\A, \B).
    \end{gather*}
\end{remark}

An intuitive overview of the comparative behavior between $\S_\alpha(\A)$ and $\S_\alpha^k(\A)$ for $n = 1000$ is reported in Figure \ref{fig_intuitive} and \ref{fig_intuitive1}, where we evaluate the impact of $k$, $\alpha$ and eigenspectrum decay rate (EDR) $r$ respectively. The eigenvalues are initialized by $\lambda_i = e^{-ri/n}$ and then normalized. It can be observed from Figure \ref{fig_intuitive} that $\S_\alpha^k(\A)$ is always larger than $\S_\alpha(\A)$ since the uncertainty of the latter $n - k$ outcomes are maximized. Moreover, $\S_\alpha^k(\A)$ quickly converges to $\S_\alpha(\A)$ with the increase of $k$, especially in extreme cases when the eigenspectrum of $\A$ is flat or steep. From Figure \ref{fig_intuitive1}, we can see that for small $k$, $\S_\alpha^k(\A)$ decreases slow with the increase of $\alpha$ when $\alpha < 1$ and fast otherwise. This behavior is the opposite when $k$ becomes large. Furthermore, we can see that EDR directly influences the value of entropy, as a flat eigenspectrum indicates higher uncertainty and steep the opposite. As can be seen, $\S_\alpha^k(\A)$ monotonically decreases with the increase of $r$, and decreases faster than $\S_\alpha(\A)$ in a certain range which varies according to the choice of $k$, indicating higher sensitivity to informative distribution changes when the hyper-parameter $k$ is selected properly.

\begin{figure}[t]
	\small
	\centering
	\includegraphics[width=0.45\textwidth]{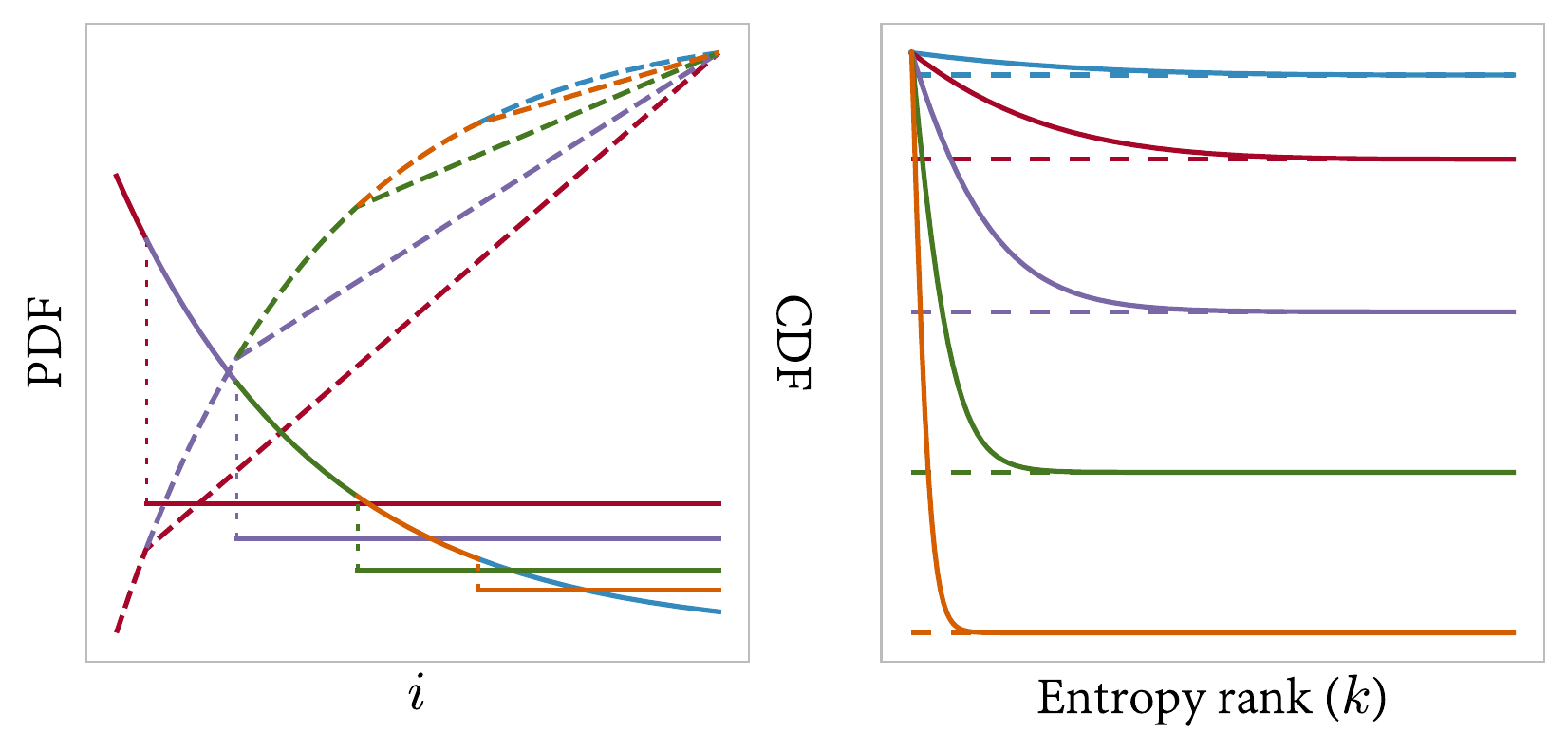}
	\caption{Left: PDF (solid) and CDF (dashed) of the altered eigenspectrum for different ranks $k$. Right: The convergence behavior of $\S_\alpha^k(\A)$ (solid) to $\S_\alpha(\A)$ (dashed) with the increase of rank $k$ for different EDR ($r$).}
	\label{fig_intuitive}
\end{figure}

Moreover, consider the case that the data samples $\{\x_i\}_{i=1}^n$ are randomly perturbed, i.e. $\y_i = \x_i + \varepsilon\p_i$, where $\p_i$ are random vectors comprised of i.i.d. entries with zero expectation and unit variance. Let $\A$ and $\B$ be kernel matrices constructed from $\{\x_i\}_{i=1}^n$ and $\{\y_i\}_{i=1}^n$ respectively, and let $\{\lambda_i\}_{i=1}^n$, $\{\mu_i\}_{i=1}^n$ be their eigenvalues. Then it satisfies that $\mu_i \approx \lambda_i + \u_i^\top(\B-\A)\u_i$ \cite{ngo2005approach}, where $\u_i$ is the corresponding eigenvector of $\lambda_i$. When $\varepsilon$ is small, the entries as well as the eigenvalues of $\A$ are nearly independently perturbed. The following theorem shows that $\S_\alpha^k(\A)$ is more robust against small noises in data compared to $\S_\alpha(\A)$:
\begin{theorem} \label{th:robust}
    Let $\{\nu_i\}_{i=1}^n$ be independent random variables with zero mean and variance $\{\sigma_i^2\}_{i=1}^n$. Let $\A$ and $\B$ be PSD matrices with eigenvalues $\lambda_i$ and $\mu_i = \lambda_i + \nu_i$ respectively. If $\sum_{i=1}^k\sigma_i^2 \le \sum_{i=k+1}^n\sigma_i^2$ or $\alpha > 1$, there exists $\epsilon > 0$ such that when all $|\nu_i| \le \epsilon$, we have $\Var[\IP_\alpha^k(\B)] \le \Var[\IP_\alpha(\B)]$, where $\IP$ is the information potential \cite{gokcay2000new} defined as $\IP_\alpha(\B) = 2^{(1-\alpha)\S_\alpha(\B)}$ and $\IP_\alpha^k(\B) = 2^{(1-\alpha)\S_\alpha^k(\B)}$.
\end{theorem}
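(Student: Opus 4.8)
The plan is to reduce the statement to a first-order (delta-method) comparison of the two variances, and then to dispatch the resulting deterministic inequality by a case split on $\alpha$. First I would express both information potentials through the perturbed eigenvalues. Since $\S_\alpha(\B)=\frac{1}{1-\alpha}\log_2\sum_{i=1}^n\mu_i^\alpha$ we have $\IP_\alpha(\B)=\sum_{i=1}^n\mu_i^\alpha$, and writing $\mu_r=\frac{1}{n-k}(1-\sum_{i=1}^k\mu_i)=\lambda_r-\frac{1}{n-k}\sum_{i=1}^k\nu_i$ we have $\IP_\alpha^k(\B)=\sum_{i=1}^k\mu_i^\alpha+(n-k)\mu_r^\alpha$. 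Both are smooth functions of $(\nu_1,\dots,\nu_n)$ near the origin, provided $\epsilon$ is small enough that every $\mu_i$ and $\mu_r$ stays positive and the sorted top-$k$ block does not change (for the latter one also needs $\lambda_k>\lambda_{k+1}$; for $\alpha<1$ one additionally wants $\lambda_r>0$). Taylor expanding at $\nu=0$ gives $\IP_\alpha(\B)=\IP_\alpha(\A)+\alpha\sum_{i=1}^n\lambda_i^{\alpha-1}\nu_i+O(\epsilon^2)$, and — because $\mu_r$ depends only on $\nu_1,\dots,\nu_k$ with $\partial\mu_r/\partial\nu_i=-\frac{1}{n-k}$ — also $\IP_\alpha^k(\B)=\IP_\alpha^k(\A)+\alpha\sum_{i=1}^k(\lambda_i^{\alpha-1}-\lambda_r^{\alpha-1})\nu_i+O(\epsilon^2)$. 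Using $\mathbb{E}[\nu_i]=0$ and independence, the variances are, to leading order, $\Var[\IP_\alpha(\B)]=\alpha^2\sum_{i=1}^n\lambda_i^{2(\alpha-1)}\sigma_i^2+o(\epsilon^2)$ and $\Var[\IP_\alpha^k(\B)]=\alpha^2\sum_{i=1}^k(\lambda_i^{\alpha-1}-\lambda_r^{\alpha-1})^2\sigma_i^2+o(\epsilon^2)$, so it is enough to prove the deterministic inequality $\sum_{i=1}^k(\lambda_i^{\alpha-1}-\lambda_r^{\alpha-1})^2\sigma_i^2\le\sum_{i=1}^n\lambda_i^{2(\alpha-1)}\sigma_i^2$ and then let $\epsilon\to0$.

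For that inequality I would use that $0\le\lambda_r\le\lambda_i$ for all $i\le k$, since $\lambda_r$ averages the $n-k$ smallest eigenvalues of $\A$. If $\alpha>1$, then $t\mapsto t^{\alpha-1}$ is increasing and nonnegative, so $0\le\lambda_i^{\alpha-1}-\lambda_r^{\alpha-1}\le\lambda_i^{\alpha-1}$ and hence $(\lambda_i^{\alpha-1}-\lambda_r^{\alpha-1})^2\le\lambda_i^{2(\alpha-1)}$ for $i\le k$; summing against $\sigma_i^2$ and adding back the nonnegative tail $\sum_{i>k}\lambda_i^{2(\alpha-1)}\sigma_i^2$ settles this case with no extra hypothesis. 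If $\alpha<1$, then $t\mapsto t^{\alpha-1}$ is decreasing, so $0\le\lambda_r^{\alpha-1}-\lambda_i^{\alpha-1}\le\lambda_r^{\alpha-1}$ and therefore $(\lambda_i^{\alpha-1}-\lambda_r^{\alpha-1})^2\le\lambda_r^{2(\alpha-1)}$, which yields $\Var[\IP_\alpha^k(\B)]\le\alpha^2\lambda_r^{2(\alpha-1)}\sum_{i=1}^k\sigma_i^2+o(\epsilon^2)$. It is precisely here that the hypothesis $\sum_{i\le k}\sigma_i^2\le\sum_{i>k}\sigma_i^2$ enters: the goal reduces to $\lambda_r^{2(\alpha-1)}\sum_{i>k}\sigma_i^2\le\sum_{i>k}\lambda_i^{2(\alpha-1)}\sigma_i^2$, i.e. $\sum_{i>k}(\lambda_i^{2(\alpha-1)}-\lambda_r^{2(\alpha-1)})\sigma_i^2\ge0$, which one wants to read off from the convexity of $t\mapsto t^{2(\alpha-1)}$ on $(0,\infty)$ together with the fact that $\lambda_r$ is the (unweighted) mean of $\{\lambda_i\}_{i>k}$.

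I expect this last step to be the main obstacle, because plain Jensen only lower-bounds the unweighted sum $\sum_{i>k}\lambda_i^{2(\alpha-1)}$ by $(n-k)\lambda_r^{2(\alpha-1)}$; to control the $\sigma_i^2$-weighted version one must also use the ordering $\lambda_{k+1}\ge\cdots\ge\lambda_n$, for instance via a rearrangement/Chebyshev-sum step that separates the indices according to the sign of $\lambda_i-\lambda_r$, before combining with $\sum_{i\le k}\sigma_i^2\le\sum_{i>k}\sigma_i^2$. A minor loose end is the passage from the first-order expansion to the exact variances: one has to verify that the $O(\epsilon^2)$ remainders contribute only $o(\epsilon^2)$ to the variance uniformly, and treat the borderline configurations in which the leading-order comparison is an equality (e.g. $n-k=1$, or tail eigenvalues tending to $0$) by a separate continuity argument.
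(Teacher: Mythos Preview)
Your approach coincides with the paper's: linearize both potentials, compute the leading-order variances $\alpha^2\sum_{i=1}^n\lambda_i^{2(\alpha-1)}\sigma_i^2$ and $\alpha^2\sum_{i\le k}(\lambda_i^{\alpha-1}-\lambda_r^{\alpha-1})^2\sigma_i^2$, and split on the sign of $\alpha-1$. The $\alpha>1$ case is handled exactly as you do. For $\alpha<1$ the paper also bounds the low-rank variance by $\alpha^2\lambda_r^{2(\alpha-1)}\sum_{i\le k}\sigma_i^2$ and then claims, ``by Jensen's inequality,'' that
\[
\lambda_r^{2(\alpha-1)}\;\le\;\frac{\sum_{i>k}\sigma_i^2\,\lambda_i^{2(\alpha-1)}}{\sum_{i>k}\sigma_i^2},
\]
which together with the hypothesis $\sum_{i\le k}\sigma_i^2\le\sum_{i>k}\sigma_i^2$ would finish. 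This is precisely the step you flag as the main obstacle: since $\lambda_r$ is the \emph{unweighted} mean of $\{\lambda_i\}_{i>k}$ while the right-hand side is a $\sigma^2$-weighted average of $\lambda_i^{2(\alpha-1)}$, convexity of $t\mapsto t^{2(\alpha-1)}$ alone does not give this inequality, and without some monotonicity relation between the $\sigma_i$ and the $\lambda_i$ (not among the stated hypotheses) it can fail. So your proposal matches the paper step for step, and the obstacle you isolate is genuine --- the paper's own proof does not resolve it either. The loose ends you mention about passing from the first-order expansion to the exact variances are likewise glossed over in the paper, which simply writes $\approx$ in place of an explicit remainder bound.
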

\begin{remark}
    Theorem \ref{th:robust} indicates that $\IP_\alpha^k(\B)$ enables lower variance than $\IP_\alpha(\B)$ against random perturbation of the eigenvalues under mild conditions, which is easy to be satisfied since in most cases we have $k \ll n$. Combining with our discussion above, the low-rank R\'enyi's entropy is more sensitive to informative variations in probability distributions which will surely induce an increase or decrease in entropy, while being insensitive to uninformative perturbations caused by noises in the data samples.
\end{remark}

\subsection{Extending to Multivariate Scenarios}
Following Definition \ref{def_lowrank} and Proposition \ref{prop_lowrank}, the low-rank variant of multivariate R\'enyi's joint entropy, in virtue of the Venn diagram relation for Shannon's entropy \cite{yeung1991new}, could be naturally derived:
\begin{definition} \label{def_lowrank_joint}
	Let $\{\kappa_i\}_{i=1}^L: \mathcal{X}^i \times \mathcal{X}^i \mapsto \mathbb{R}$ be positive infinitely divisible kernels and $\{\x_i^1, \cdots, \x_i^L\}_{i=1}^n \subset \mathcal{X}^1 \times \cdots \times \mathcal{X}^L$, the low-rank R\'enyi's joint entropy is defined as:
	\begin{equation*}\textstyle
		\S_\alpha^k(\A_1, \cdots, \A_L) = \S_\alpha^k \prn*{\frac{\A_1 \circ \cdots \circ \A_L}{\mathbf{tr}(\A_1 \circ \cdots \circ \A_L)}},
	\end{equation*}
    where $\A_1, \cdots, \A_L$ are normalized kernel matrices constructed from $\{\x_i^1\}_{i=1}^n$, $\cdots$, $\{\x_i^L\}_{i=1}^n$ respectively and $\circ$ denotes the Hadamard product.
\end{definition}

\begin{figure}[t]
	\small
	\centering
	\includegraphics[width=0.45\textwidth]{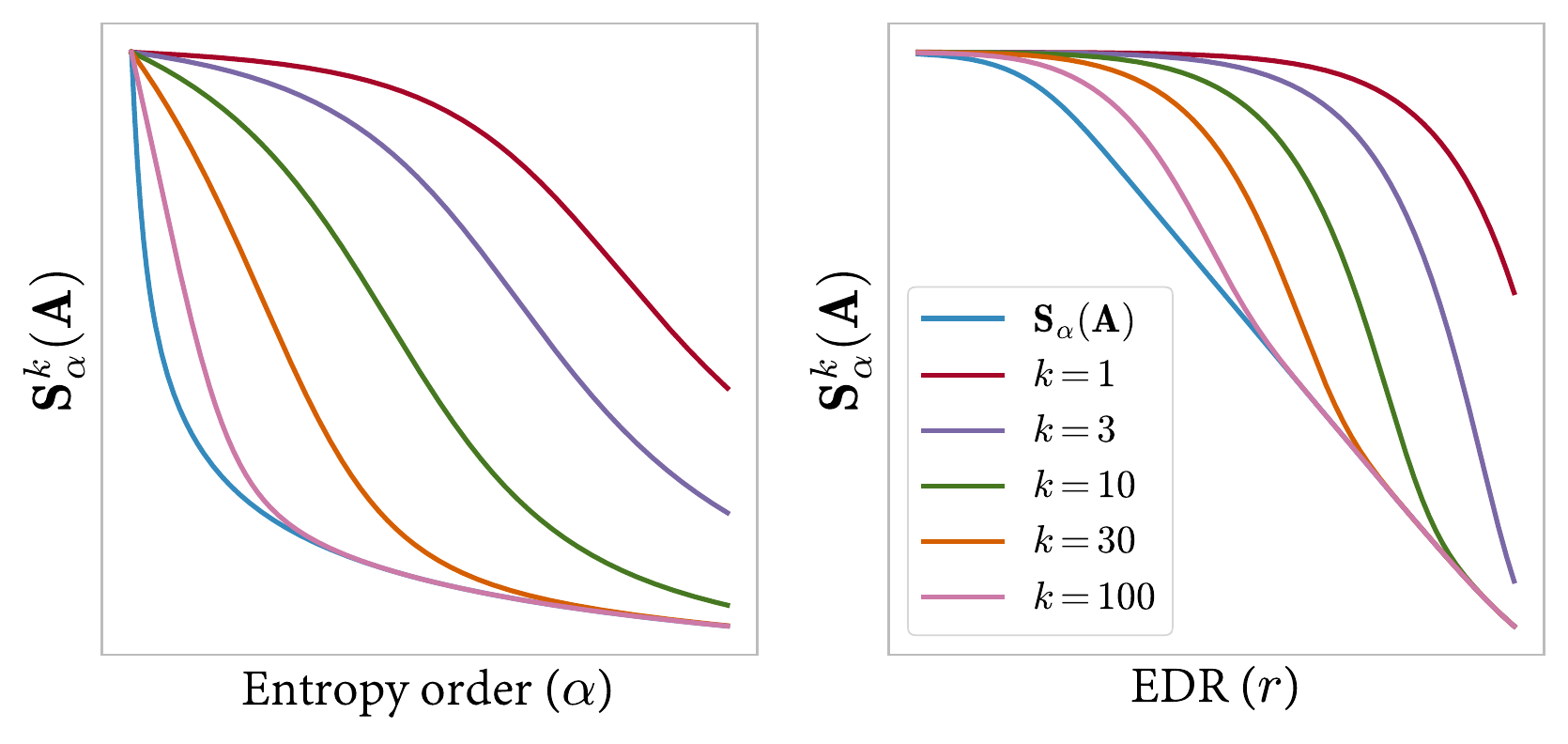}
	\caption{Left: The behavior of $\S_\alpha^k(\A)$ when the entropy order $\alpha$ varies from $0$ to $2$. Right: The behavior of $\S_\alpha^k(\A)$ when the EDR of $\A$ varies from flat to steep.}
	\label{fig_intuitive1}
\end{figure}

This joint entropy definition enables further extension to multivariate conditional entropy and mutual information:
\begin{gather*}
    \S_\alpha^k(\A_1, \cdots, \A_k| \B) = \S_\alpha^k(\A_1, \cdots, \A_k, \B) - \S_\alpha^k(\B), \\
    \begin{aligned}
    	\I_\alpha^k(\A_1, \cdots, \A_k; \B) =&\, \S_\alpha^k(\A_1, \cdots, \A_k) + \S_\alpha^k(\B) \\
    	&- \S_\alpha^k(\A_1, \cdots, \A_k, \B),
    \end{aligned}
\end{gather*}
where $\A_1, \cdots, \A_L$ and $\B$ are normalized kernel matrices constructed from the variables $\{\x_i^1\}_{i=1}^n$, $\cdots$, $\{\x_i^L\}_{i=1}^n$ and the target label $\{\y_i\}_{i=1}^n$ respectively. Their positiveness can be guaranteed through a reduction to axiom (f) and (g). These multivariate information quantities enable much more widespread applications e.g. feature selection, dimension reduction and information-based clustering.

\section{Approximating Low-rank R\'enyi's Entropy}
Although only the largest eigenvalues are accessed by our entropy definition, one still needs to calculate the full eigenspectrum of the PSD matrix $\A$ through eigenvalue decomposition algorithms, resulting in $\mathcal{O}(n^3)$ overall time cost. To alleviate the computational burden, we design fast approximations by leveraging random projection and Lanczos iteration techniques for low-rank R\'enyi's entropy. 

\subsection{Random Projection Approach}
Random projection offers a natural way to approximate the low-rank representation of kernel matrices. The core idea is to project the $n \times n$ PSD matrix $\A$ into a $n \times s$ subspace, and then use the largest $k$ singular value of the projected matrix as approximations of the largest $k$ eigenvalues, as summarized in Algorithm \ref{alg_proj}. In this way, the main computation cost is reduced to $\mathcal{O}(n^2s)$ or even $\mathcal{O}(ns^2)$, ($s \ll n$), substantially lower than the original $\mathcal{O}(n^3)$ approach. Based on this fact, we develop efficient approximation algorithms by exploring different random projection techniques, in which the construction of $\P$ varies depending on the practical applications, ranging from simple but effective Gaussian distributions to advanced random orthogonal projections.

\begin{algorithm}[t]
	\begin{algorithmic}[1]
		\STATE \textbf{Input:} Integers $n$, $k \in [1,n/2], s \ge k$, kernel matrix $\A \in \mathbb{R}^{n \times n}$, order $\alpha > 0$.
		\STATE \textbf{Output:} Approximation to $\S_\alpha^k(\A)$;
		
		\STATE Construct a random projection matrix $\P \in \mathbb{R}^{n \times s}$.
		\STATE Calculate $\hat{\A} =  \A \P \in \mathbb{R}^{n \times s}$.
		\STATE Calculate the largest $k$ singular values $\hat{\lambda}_i$, $i \in [1,k]$ of $\hat{\A}$ through singular value decomposition.
		\STATE Calculate $\hat{\lambda}_r = \frac{1}{n-k}\prn*{1 - \sum_{i=1}^{k} \hat{\lambda}_i}$.
		
		\STATE \textbf{Return:} $\hat{\S}_\alpha^k(\A) = \frac{1}{1-\alpha} \log_2 \prn*{\sum_{i=1}^{k} \hat{\lambda}_i^\alpha + (n-k)\hat{\lambda}_r^\alpha}$.
	\end{algorithmic}
	\caption{Approximation via Random Projection}
	\label{alg_proj}
\end{algorithm}

\subsubsection{Gaussian Random Projection} \leavevmode\\
%In the view of eigenvalue approximation, one expects the structure information of $\A$ to be preserved after projection.
As one of the most widely used random projection techniques, Gaussian random projection (GRP) admits a simple but elegant solution for eigenvalue approximation:
\begin{equation*}\textstyle
    \P = \sqrt{n/s} \cdot \G,
\end{equation*}
where the columns of $\G \in \mathbb{R}^{n \times s}$ are initialized by i.i.d random standard Gaussian variables and then orthogonalized. The time complexity of GRP is $\mathcal{O}(n^2s)$.

\subsubsection{Subsampled Randomized Hadamard Transform} \leavevmode\\
SRHT \cite{lu2012faster, tropp2011improved} is a simplification of the fast Johnson-Lindenstrauss transform \cite{ailon2009fast} which
%It is of structured dimension reduction mapping based on the Walsh-Hadamard matrix that
preserves the geometry of an entire subspace of vectors compared to GRP. In our settings, the $n \times s$ SRHT matrix is constructed by
\begin{equation*}\textstyle
	\P = \sqrt{1/s} \cdot \D \H \S,
\end{equation*}
where $\D \in \mathbb{R}^{n \times n}$ is a diagonal matrix with random $\{\pm 1 \}$ entries, $\H \in \mathbb{R}^{n \times n}$ is a Walsh-Hadamard matrix, $\S \in \mathbb{R}^{n \times s}$ is a subsampling matrix whose columns are a uniformly chosen subset of the standard basis of $\mathbb{R}^n$.

Two key ingredients make SRHT an efficient approximation strategy: first, it takes only $\mathcal{O}(n^2 \min(\log(n),s))$ time complexity to calculate the projected matrix $\hat{\A}$; second, the orthonormality between the columns of $\A$ can be preserved after projection, thus is more likely to achieve lower approximation error compared to GRP.

\subsubsection{Input-Sparsity Transform} \leavevmode\\
Similar to SRHT, input-sparsity transform (IST) \cite{mahoney2011randomized, woodruff2020near} utilizes the fast John-Lindenstrauss transform to reduce time complexity for least-square regression and low-rank approximation:
%Typically, IST approximates a matrix by sparse embedding, which allows us to only focus on the non-zero elements of the matrix:
\begin{equation*}\textstyle
    \P = \sqrt{n/s} \cdot \D \S,
\end{equation*}
where $\D$ and $\S$ are constructed in the same way as SRHT. The complexity of calculating $\hat{\A}$ using IST is $\mathcal{O}(\textrm{nnz}(\A))$, where nnz denotes the number of non-zero entries, resulting in a total complexity of $\mathcal{O}(\min(\textrm{nnz}(\A), ns^2))$.

\subsubsection{Sparse Graph Sketching} \leavevmode\\
The idea of using sparse graphs as sketching matrices is proposed in \cite{hu2021sparse}. It is shown that the generated bipartite graphs by uniformly adding edges enjoy elegant theoretical properties known as the Expander Graph or Magical Graph with high probability, and thus serve as an effective random projection strategy:
\begin{equation*}\textstyle
    \P = \sqrt{1/p} \cdot \G,
\end{equation*}
where $p \in \mathbb{N}$ is the hyper-parameter that controls the sparsity, and each column $\g$ of $\G$ is constructed independently by uniformly sampling $c \subset [n]$ with $|c| = p$, and then setting $\g_i = \{\pm 1\}$ randomly for $i \in c$ and $\g_i = 0$ for $i \notin c$. Similar to IST, sparse graph sketching (SGS) also utilizes the sparsity of input matrices and achieves $\mathcal{O}(\textrm{nnz}(\A)p)$ computational complexity to calculate the projected matrix.

\subsubsection{Theoretical Results} \leavevmode\\
Next, we provide the main theorem on characterizing the quality-of-approximation for low-rank R\'enyi's entropy:
\begin{theorem} \label{th:rp_approx}
	Let $\A$ be positive definite and
	\begin{gather*}
	    s = \begin{cases}
	        \mathcal{O}(k + \log(1/\delta)/\epsilon_0^2), & \textrm{for GRP} \\
	        \mathcal{O}((k + \log n)\log k/\epsilon_0^2), & \textrm{for SRHT} \\
	        \mathcal{O}(k^2/\epsilon_0^2), & \textrm{for IST} \\
	        \mathcal{O}(k\log(k/\delta\epsilon_0)/\epsilon_0^2), & \textrm{for SGS}
	    \end{cases} \\
	    p = \mathcal{O}(\log(k/\delta\epsilon_0)/\epsilon_0), \qquad\textrm{for SGS}
	\end{gather*}
	where $\epsilon_0 = \epsilon\lambda_k\lambda_r$, then for $k \le n/2$, with confidence at least $1-\delta$, the output of Algorithm \ref{alg_proj} satisfies
	\begin{equation*}
		|\lambda_i^2 - \hat{\lambda}_i^2 | \leq \epsilon
	\end{equation*} 
	for all $i \in [1,k]$ eigenvalues of $\A$ and
	\begin{equation*}\textstyle
		|\S_\alpha^k(\A) - \hat{\S}_\alpha^k(\A)| \leq \abs{\frac{\alpha}{1-\alpha} \log_2 \prn*{1-\epsilon}}.
	\end{equation*}
\end{theorem}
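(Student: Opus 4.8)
The plan is to prove the two displayed inequalities in turn: the spectral bound $|\lambda_i^2-\hat\lambda_i^2|\le\epsilon$, which is the heart of the argument and a statement in randomized numerical linear algebra, and then the entropy bound, which follows from it by elementary estimates on the map $x\mapsto\frac{1}{1-\alpha}\log_2 x$. The starting observation is the identity $\hat\lambda_i^2 = \sigma_i^2(\A\P) = \lambda_i(\P^\top\A^\top\A\P) = \lambda_i(\A\P\P^\top\A)$, valid since $\A$ is symmetric, so $\hat\lambda_i^2$ is the $i$-th largest eigenvalue of the sketched Gram matrix $\A\P\P^\top\A$, to be compared with $\lambda_i^2=\lambda_i(\A^2)$.

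For the spectral bound I would write $\A = \A_k + \A_{\setminus k}$ via the eigendecomposition, with $\A_k = \U_k\LLambda_k\U_k^\top$ carrying the top $k$ eigenpairs and $\A_k\A_{\setminus k}=\A_{\setminus k}\A_k=\0$, and pass to the orthonormal basis $(\U_k,\U_{\setminus k})$, in which $\A\P\P^\top\A$ is a $2\times2$ block matrix with diagonal blocks $G_{11}=\LLambda_k\,\U_k^\top\P\P^\top\U_k\,\LLambda_k$ and $G_{22}=\LLambda_{\setminus k}\,\U_{\setminus k}^\top\P\P^\top\U_{\setminus k}\,\LLambda_{\setminus k}$ and off-diagonal block $G_{12}=\LLambda_k\,\U_k^\top\P\P^\top\U_{\setminus k}\,\LLambda_{\setminus k}$. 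For each of GRP, SRHT, IST and SGS, the stated $s$ (and $p$ for SGS) is precisely the regime in which, with probability at least $1-O(\delta)$, $\P$ is simultaneously an oblivious $O(\epsilon_0)$-subspace embedding for $k$-dimensional subspaces and a good spectral (stable-rank) approximate matrix-multiplication sketch; I would cite these classical guarantees from the referenced works rather than re-derive them. The embedding property on $\mathrm{range}(\U_k)$ forces $\|G_{11}-\LLambda_k^2\|\le\|\LLambda_k\|^2\,O(\epsilon_0)\le O(\epsilon_0)$ since $\lambda_1\le1$; approximate multiplication forces $\|G_{12}\|\le O(\epsilon_0)$ and $\|G_{22}\|=\|\P^\top\A_{\setminus k}\|^2\le\lambda_{k+1}^2+O(\epsilon_0)$, where the normalization $\tr(\A)=1$ (hence $\|\A_{\setminus k}\|_F\le1$) is what keeps the multiplication error small despite $\A_{\setminus k}$ being full-rank. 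Because $\lambda_{k+1}\le\lambda_k$, the block $G_{22}$ cannot intrude into the top $k$ eigenvalues of the block matrix, so a Weyl-type perturbation argument (Weyl on the off-diagonal part, together with the $1$-Lipschitz dependence of sorted eigenvalues on the diagonal blocks) yields $|\hat\lambda_i^2-\lambda_i^2|\le\epsilon_0=\epsilon\lambda_k\lambda_r$ for all $i\in[1,k]$ --- in particular $|\lambda_i^2-\hat\lambda_i^2|\le\epsilon$ since $\lambda_k\lambda_r\le1$ --- after choosing the hidden constant in $s$ suitably and taking a union bound over the $k$ eigenvalues and the sketch's failure events.

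The entropy bound is then bookkeeping. Because $\lambda_k\lambda_r\le\lambda_i^2$ for $i\le k$, the refined bound $|\hat\lambda_i^2-\lambda_i^2|\le\epsilon_0$ gives $|\hat\lambda_i-\lambda_i|=\frac{|\hat\lambda_i^2-\lambda_i^2|}{\hat\lambda_i+\lambda_i}\le\frac{\epsilon_0}{\lambda_k}=\epsilon\lambda_r\le\epsilon\lambda_i$, so $(1-\epsilon)\lambda_i\le\hat\lambda_i\le(1-\epsilon)^{-1}\lambda_i$; and $|\hat\lambda_r-\lambda_r|\le\frac{1}{n-k}\sum_{i=1}^k|\hat\lambda_i-\lambda_i|\le\frac{k}{n-k}\epsilon\lambda_r\le\epsilon\lambda_r$ using $k\le n/2$, so $\hat\lambda_r$ enjoys the same two-sided bound. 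Substituting into $\IP_\alpha^k(\A)=\sum_{i=1}^k\lambda_i^\alpha+(n-k)\lambda_r^\alpha$ and its hatted analogue (all $\alpha>0$) yields $(1-\epsilon)^\alpha\,\IP_\alpha^k(\A)\le\hat\IP_\alpha^k(\A)\le(1-\epsilon)^{-\alpha}\,\IP_\alpha^k(\A)$, and taking $\log_2$ and dividing by $|1-\alpha|$ produces exactly $|\S_\alpha^k(\A)-\hat\S_\alpha^k(\A)|\le\bigl|\frac{\alpha}{1-\alpha}\log_2(1-\epsilon)\bigr|$.

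The step I expect to be the main obstacle is the spectral bound, and within it the treatment of the high-rank tail $\A_{\setminus k}$: since $\A$ is only assumed positive definite, there need be no spectral gap at the $k$-th eigenvalue, the trailing eigenvalues are not individually small, and a crude estimate such as $\|\A_{\setminus k}\P\|\le\|\P\|\,\lambda_{k+1}$ is far too lossy to conclude anything about the top $k$ eigenvalues of $\A\P\P^\top\A$. Driving $\|G_{12}\|$ and $\|G_{22}\|$ down to the $\epsilon_0$ scale is what forces the stable-rank form of approximate matrix multiplication together with $\tr(\A)=1$, and verifying that all four sketch constructions deliver this at the advertised size is the technical core; a lesser, purely administrative point is folding the several high-probability events (one per sketch property, plus the union over the $k$ eigenvalues) into the single confidence $1-\delta$.
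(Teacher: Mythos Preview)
Your proposal is sound in outline but takes a genuinely different route from the paper on the spectral part. The paper does not use a $2\times2$ block decomposition, approximate matrix multiplication, or Weyl's inequality at all. Instead, it invokes the subspace-embedding lemmas (the same four references) to obtain $\|\PPhi_k^\top\P\P^\top\PPhi_k - \I_k\|_2\le\epsilon_0$, and then applies a relative perturbation bound of Demmel (stated as a separate lemma: for symmetric positive definite $\D\G\D$ with $\G_{ii}=1$, perturbing $\G$ by $\E$ shifts each eigenvalue by at most $\|\E\|_2/\lambda_{\min}(\G)$) with $\D=\SSigma_k$, $\G=\I_k$, $\E=\PPhi_k^\top\P\P^\top\PPhi_k-\I_k$. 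This yields $|\lambda_i^2-\hat\lambda_i^2|\le\epsilon_0$ directly from the $k\times k$ restriction, with no separate treatment of the tail $\A_{\setminus k}$; the paper simply asserts in one line that the top $k$ singular values of $\A\P$ coincide with those of $\SSigma_k\PPhi_k^\top\P$. Your approach, by contrast, keeps the full $n\times n$ matrix, isolates the tail explicitly through the blocks $G_{12}$ and $G_{22}$, and controls them with approximate-matrix-multiplication and stable-rank estimates before invoking Weyl. What the paper's route buys is brevity: one perturbation lemma and no case analysis on the tail. What yours buys is a more explicit accounting of the tail contribution---the very issue you flag as the main obstacle---at the cost of having to verify that each of the four sketches delivers spectral AMM at the advertised $s$. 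The entropy bookkeeping---passing from $|\lambda_i^2-\hat\lambda_i^2|\le\epsilon_0$ to $|\hat\lambda_i-\lambda_i|\le\epsilon_0/\lambda_k=\epsilon\lambda_r$, propagating to $\hat\lambda_r$ via $k\le n/2$, and then applying the multiplicative bound inside the $\log_2$---is essentially identical in both arguments, though the paper packages the last step as a standalone proposition rather than inlining it as you do.
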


\begin{remark}
	Theorem \ref{th:rp_approx} provides the accuracy guarantees for low-rank R\'enyi's entropy approximation via random projections. It can be observed that the approximation error grows with the increase of $\alpha$ when $\alpha$ is small. Note that although the error bound is additive in nature, it can be further reduced to a relative error bound under mild condition $\S_\alpha^k(G) \geq \sqrt{\epsilon}$. In general, Theorem \ref{th:rp_approx} requires $s = \mathcal{O}(k + 1/\epsilon^2)$ to achieve $1\pm\epsilon$ absolute accuracy, which is consistent with the complexity results of least squares and low rank approximations \cite{mahoney2011randomized}.
\end{remark}

\subsection{Lanczos Iteration Approach}
Besides random projection, the Lanczos algorithm is also widely adopted to find the $k$ extreme (largest or smallest in magnitude) eigenvalues and the corresponding eigenvectors of an $n \times n$ Hermitian matrix $\A$. Given an initial vector $\q$, the Lanczos algorithm utilizes the Krylov subspace spanned by $\{\q, \A\q, \cdots, \A^s\q\}$ to construct an tridiagonalization of $\A$ whose eigenvalues converge to those of $\A$ along with the increase of $s$, and are satisfactorily accurate even for $s \ll n$. As shown in Algorithm \ref{alg_lanczos}, the main computation cost is the $\mathcal{O}(n^2s)$ matrix-vector multiplications in the Lanczos process, which could be further reduced to $\mathcal{O}(\mathrm{nnz}(\A)s)$ when $\A$ is sparse. The computational cost of reorthogonalization can be further alleviated by explicit or implicit restarting Lanczos methods. The following theorem establishes the accuracy guarantee of Algorithm \ref{alg_lanczos}:

\begin{algorithm}[t]
    \caption{Approximation via Lanczos Iteration}
    \label{alg_lanczos}
    \begin{algorithmic}[1]
    	\STATE \textbf{Input:} Integers $n$, $k \in [1,n/2], s \ge k$, kernel matrix $\A \in \mathbb{R}^{n \times n}$, order $\alpha > 0$, initial vector $\q$.
    	\STATE \textbf{Output:} Approximation to $\S_\alpha^k(G)$.
    	
    	\STATE Set $\q_0 = 0, \beta_0 = 0, \q_1 = \q / \norm{\q}$.
    	\FOR{$j = 1, 2, \cdots, s$}
    	\STATE $\hat{\q}_{j+1} = \A\q_j - \beta_{j-1}\q_{j-1}$, $\gamma_j = \ang{\hat{\q}_{j+1}, \q_j}$.
    	\STATE $\hat{\q}_{j+1} = \hat{\q}_{j+1} - \gamma_j\q_j$.
    	\STATE Orthogonalize $\hat{\q}_{j+1}$ against $\q_1, \cdots, \q_{j-1}$.
    	\STATE $\beta_j = \norm{\hat{\q}_{j+1}}$, $\q_{j+1} = \hat{\q}_{j+1} / \beta_j$.
    	\ENDFOR
    	\STATE Calculate the largest $k$ eigenvalues $\hat{\lambda}_i$, $i \in [1, k]$ of \\
    	$\qquad \T = \begin{bmatrix}
            \gamma_1 & \beta_1 & & 0 \\
            \beta_1 & \gamma_2 & & \\
            & & \ddots & \beta_{s-1} \\
            0 & & \beta_{s-1} & \gamma_s \\
        \end{bmatrix}$.
		\STATE Calculate $\hat{\lambda}_r = \frac{1}{n-k}\prn*{1 - \sum_{i=1}^{k} \hat{\lambda}_i}$.
		
		\STATE \textbf{Return:} $\hat{\S}_\alpha^k(\A) = \frac{1}{1-\alpha} \log_2 \prn*{\sum_{i=1}^{k} \hat{\lambda}_i^\alpha + (n-k)\hat{\lambda}_r^\alpha}$.
    \end{algorithmic}
\end{algorithm}

\begin{theorem} \label{th:lanczos_approx}
    Let $\A$ be positive definite, $\q$ be the initial vector, $\{\pphi_i\}_{i=1}^k$ be the corresponding eigenvectors and
    \begin{equation*}
        \textstyle
        s = \ceil*{k + \frac{1}{2\log R} \log\prn*{\frac{4\theta^2K^2\lambda_1}{\epsilon\lambda_r}}},
    \end{equation*}
    where
    \begin{gather*}
        \textstyle
        R = \gamma + \sqrt{\gamma^2 - 1}, \quad \gamma = 1 + 2\min_{i\in[1,k]} \frac{\lambda_i - \lambda_{i+1}}{\lambda_{i+1} - \lambda_n}, \\
        \textstyle
        \theta = \max_{i\in[1,k]} \tan\ang{\pphi_i, \q}, \quad K = \prod_{j=1}^{k-1} \frac{\hat{\lambda}_j - \lambda_n}{\hat{\lambda}_j - \lambda_k},
    \end{gather*}
    then for $k \le n/2$, the output of Algorithm \ref{alg_lanczos} satisfies\begin{equation*}
        \textstyle
		0 \le \lambda_i - \hat{\lambda}_i \leq \epsilon \lambda_i
	\end{equation*} 
	for all $i \in [1,k]$ eigenvalues of $\A$ and
	\begin{equation*}\textstyle
		|\S_\alpha^k(\A) - \hat{\S}_\alpha^k(\A)| \leq \abs{\frac{\alpha}{1-\alpha} \log_2 \prn*{1-\epsilon}}.
	\end{equation*}
\end{theorem}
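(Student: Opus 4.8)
The statement has two parts — the per-eigenvalue estimate $0 \le \lambda_i - \hat{\lambda}_i \le \epsilon\lambda_i$ and its transfer to the entropy functional — and I would establish them in that order; the second part is essentially a repeat of the corresponding step in the proof of Theorem~\ref{th:rp_approx}.

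For the eigenvalue estimate the starting point is the classical Kaniel--Paige--Saad convergence analysis of the Lanczos process. Since Algorithm~\ref{alg_lanczos} performs full reorthogonalization, the $s\times s$ tridiagonal $\T$ equals $\Q_s^\top\A\Q_s$ for an orthonormal basis $\Q_s$ of the Krylov subspace, so Cauchy interlacing already gives $\hat{\lambda}_i \le \lambda_i$, i.e.\ $\lambda_i - \hat{\lambda}_i \ge 0$. For the upper bound I invoke Saad's inequality: for each $i \in [1,k]$,
\[
    \lambda_i - \hat{\lambda}_i \;\le\; (\lambda_i - \lambda_n)\prn*{\frac{K_i\,\tan\ang{\pphi_i,\q}}{C_{s-i}(1+2\rho_i)}}^2 ,
\]
where $\rho_i = \frac{\lambda_i-\lambda_{i+1}}{\lambda_{i+1}-\lambda_n}$, $C_m$ is the degree-$m$ Chebyshev polynomial of the first kind, and $K_i = \prod_{j=1}^{i-1}\frac{\hat{\lambda}_j-\lambda_n}{\hat{\lambda}_j-\lambda_i}$. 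I then pass to the uniform-in-$i$ quantities of the statement: because $s-i \ge s-k$, $1+2\rho_i \ge \gamma$, and $C_m$ is increasing in both its degree (on arguments $\ge 1$) and its argument, we have $C_{s-i}(1+2\rho_i) \ge C_{s-k}(\gamma)$; monotonicity of each factor gives $K_i \le K$; and $\tan\ang{\pphi_i,\q} \le \theta$. Using the standard bound $C_m(\gamma) \ge \tfrac12 R^m$ with $R = \gamma+\sqrt{\gamma^2-1}\ge 1$, the inequality collapses to $\lambda_i - \hat{\lambda}_i \le (\lambda_i-\lambda_n)\,4\theta^2K^2R^{-2(s-k)}$. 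Substituting the prescribed $s$, which is chosen exactly so that $R^{2(s-k)} \ge 4\theta^2K^2\lambda_1/(\epsilon\lambda_r)$, yields $\lambda_i - \hat{\lambda}_i \le (\lambda_i-\lambda_n)\,\epsilon\lambda_r/\lambda_1 \le \epsilon\lambda_r$, using $\lambda_i - \lambda_n \le \lambda_1$; and since $\lambda_r \le \lambda_{k+1} \le \lambda_i$ for $i\le k$, this in particular gives the advertised $\lambda_i - \hat{\lambda}_i \le \epsilon\lambda_i$. The sharper form $\lambda_i - \hat{\lambda}_i \le \epsilon\lambda_r$ is what I carry into the next step.

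For the entropy bound I work with the information potentials $\IP_\alpha^k(\A) = \sum_{i=1}^k\lambda_i^\alpha + (n-k)\lambda_r^\alpha$ and $\widehat{\IP}_\alpha^k(\A) = \sum_{i=1}^k\hat{\lambda}_i^\alpha + (n-k)\hat{\lambda}_r^\alpha$, so that $\S_\alpha^k(\A) = \frac{1}{1-\alpha}\log_2\IP_\alpha^k(\A)$ and likewise for the hatted versions. From $\hat{\lambda}_i \le \lambda_i$ we get $\hat{\lambda}_r \ge \lambda_r$, and from $\lambda_i - \hat{\lambda}_i \le \epsilon\lambda_r$ together with $k \le n-k$ (as $k\le n/2$) we get $\hat{\lambda}_r - \lambda_r = \frac{1}{n-k}\sum_{i=1}^k(\lambda_i - \hat{\lambda}_i) \le \frac{k}{n-k}\epsilon\lambda_r \le \epsilon\lambda_r$, hence $\lambda_r \le \hat{\lambda}_r \le (1+\epsilon)\lambda_r$. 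Combining with $(1-\epsilon)\lambda_i \le \lambda_i - \epsilon\lambda_r \le \hat{\lambda}_i \le \lambda_i$ and monotonicity of $t\mapsto t^\alpha$ on $[0,\infty)$, a termwise comparison gives $(1-\epsilon)^\alpha\,\IP_\alpha^k(\A) \le \widehat{\IP}_\alpha^k(\A) \le (1+\epsilon)^\alpha\,\IP_\alpha^k(\A)$. Taking $\log_2$, dividing by $|1-\alpha|$, and using $\abs{\log_2(1-\epsilon)} = \log_2\tfrac{1}{1-\epsilon} \ge \log_2(1+\epsilon)$ then yields $\abs{\S_\alpha^k(\A) - \hat{\S}_\alpha^k(\A)} \le \abs{\frac{\alpha}{1-\alpha}\log_2(1-\epsilon)}$.

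The main obstacle is the first part: correctly invoking and simplifying the Kaniel--Paige--Saad bound — in particular justifying the uniform-in-$i$ reductions, the well-definedness and positivity of the constant $K$ (which needs the top-$k$ eigenvalues simple and the first $k-1$ Ritz values to lie strictly above $\lambda_k$), and the requirement that $\q$ be non-orthogonal to each $\pphi_i$ so that $\theta < \infty$ — and tracking constants so that the stated $s$ delivers accuracy $\epsilon\lambda_r$ rather than merely $\epsilon\lambda_i$, since it is this sharper form that makes the $\hat{\lambda}_r$ estimate and hence the entropy bound go through. The entropy-transfer step itself is routine and parallels the end of the proof of Theorem~\ref{th:rp_approx}.
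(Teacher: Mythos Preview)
Your proposal is correct and follows essentially the same route as the paper: invoke Saad's Kaniel--Paige bound (the paper's Lemma~\ref{lemma_lanczos}), uniformize over $i\in[1,k]$ via $\gamma,\theta,K$, use $C_{s-k}(\gamma)\ge\tfrac12R^{s-k}$, and then transfer to the entropy via a termwise $(1\pm\epsilon)^\alpha$ sandwich on the information potential. The only cosmetic differences are that the paper bounds $\lambda_i-\lambda_n\le\lambda_i$ (you use $\le\lambda_1$), introduces an intermediate $\epsilon_0=\epsilon\lambda_r/\lambda_1$, and packages the entropy-transfer step as a separate Proposition~\ref{prop_renyi_bound} rather than arguing it inline; your added remarks on the well-definedness of $K$ and $\theta$ are caveats the paper leaves implicit.
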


\begin{remark}
    Theorem \ref{th:lanczos_approx} provides the accuracy guarantee for the Lanczos algorithm. The relationship between approximation error and $\alpha$ is similar to those in Theorem \ref{th:rp_approx}. Algorithm \ref{alg_lanczos} achieves a much faster convergence rate compared to Algorithm \ref{alg_proj} while achieving the same level of absolute precision. When $\epsilon$ is small, $R$, $\theta$ and $K$ can be regarded as constants that depends only on the eigenspectrum of $\A$ and the initial vector $\q$, so that $s = \mathcal{O}(k + \log(1/\epsilon))$ is enough to guarantee a $1\pm\epsilon$ accuracy. In practice, $\q$ is suggested to be generated by random Gaussian in order to avoid a large $\theta$ with high probability \cite{urschel2021uniform}.
\end{remark}

\section{Experimental Results}
In this section, we evaluate the proposed low-rank R\'enyi's entropy and the approximation algorithms under large-scale experiments. Our experiments are conducted on an Intel i7-10700 (2.90GHz) CPU and an RTX 2080Ti GPU with 64GB of RAM. The algorithms are implemented in C++ with the Eigen library and in Python with the Pytorch library.

\subsection{Simulation studies}
We first test the robustness of $\S_\alpha^k(\A)$ against noises in the data. As indicated by Theorem \ref{th:robust}, low-rank R\'enyi's entropy achieves lower variance under mild conditions in terms of the information potential. We consider the case that the input data points are randomly perturbed, i.e. $\y_i = \x_i + \varepsilon\p_i$ for $i \in [1, n]$, where $\p_i$ is comprised of i.i.d. random variables. Let $\{\lambda_i\}_{i=1}^n$, $\{\mu_i\}_{i=1}^n$ denote the eigenvalues of normalized kernel matrices constructed from $\{\x_i\}_{i=1}^n$ and $\{\y_i\}_{i=1}^n$ respectively. We test the following noise distributions: Standard Gaussian $N(0,1)$, Uniform $U(-\sqrt{3}, \sqrt{3})$, Student-t $t(3)/\sqrt{3}$ and Rademacher $\{\pm 1\}$ with $n = 100$ (detailed settings are given in the appendix). The examples of variation in eigenvalues ($\mu_i - \lambda_i$) and the standard deviation (multiplied by $n$) of entropy values after $100$ trials are reported in Figure \ref{fig_variance}. It verifies our analysis that when $\varepsilon$ is small, the eigenvalues $\mu_i$ are nearly independently perturbed. Moreover, our low-rank definition achieves lower variance than matrix-based R\'enyi's entropy under different choices of $\alpha$, in which smaller $k$ corresponds to higher robustness.

\begin{figure*}[t]
	\small
	\centering
	\includegraphics[width=0.9\textwidth]{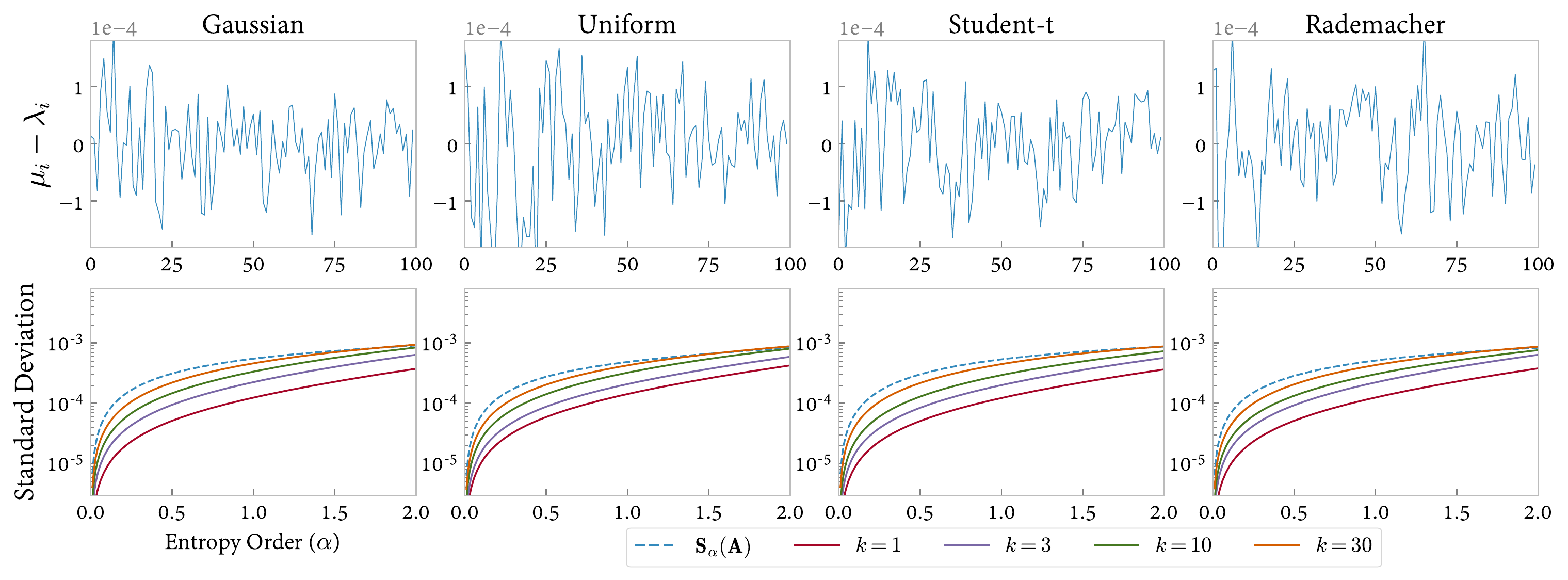}
	\caption{Upper: perturbation of the eigenvalues, i.e. $\mu_i - \lambda_i$. Lower: standard deviation of matrix-based R\'enyi's entropy and low-rank R\'enyi's entropy against random perturbations of the data samples for different values of $\alpha$.}
	\label{fig_variance}
\end{figure*}

\subsection{Real Data Examples}
In this section, we demonstrate the great potential of applying our low-rank R\'enyi's entropy functional and its multivariate extensions in two representative real-world information-related applications, which utilize the mutual information (information bottleneck) and multivariate mutual information (feature selection) respectively.

\begin{table}[t]
    \centering
    \small
	\setlength{\tabcolsep}{0.7em}
    \begin{tabular}{ccc}
        \toprule
        Objective & Accuracy (\%) & Training Time (minutes) \\
        \midrule
        CE & 92.64 $\pm$ 0.03 & \hphantom{0}- / \hphantom{0}80 \\
        VIB & 94.08 $\pm$ 0.02 & \hphantom{0}4 / \hphantom{0}84 \\
        NIB & 94.01 $\pm$ 0.04 & \hphantom{0}7 / \hphantom{0}87 \\
        \midrule
        MRIB & \ul{94.13 $\pm$ 0.04} & 46 / 126 \\
        LRIB & \bf{94.16 $\pm$ 0.09} & 15 / \hphantom{0}95 \\
		\bottomrule
    \end{tabular}
    \caption{Classification accuracy and training time of different IB objectives. Left is the time spent on IB calculation and right is the total training time.}
    \label{tbl_ib}
\end{table}

\begin{table*}[t]
	\centering
	\footnotesize
	\setlength{\tabcolsep}{0.27em}
	\begin{tabular}{ ccccccccccc }
		\toprule
		Method & Criterion & Breast & Semeion & Madelon & Krvskp & Spambase & Waveform & Optdigits & Statlog & Average \\
		\midrule
		MIFS & $\scriptstyle \I(\X_{i_l};\Y) - \beta \sum_{j=1}^{l-1} \I(\X_{i_l};\X_{i_j})$ & 4.8 & 2.5 & 6.6 & 3.8 & 7.3 & 6.4 & 4.5 & 3.8 & 4.96 \\
		FOU & $\scriptstyle \I(\X_{i_l};\Y) - \sum_{j=1}^{l-1} [\I(\X_{i_l};\X_{i_j}) - \I(\X_{i_l};\X_{i_j}|\Y)]$ & 5.2 & 2.5 & 5.6 & 1.9 & 6.2 & 5.7 & 4.8 & 5.7 & 4.70 \\
		MRMR & $\scriptstyle \I(\X_{i_l};\Y) - \frac{1}{l-1} \sum_{j=1}^{l-1} \I(\X_{i_l};\X_{i_j})$ & \bf{2.3} & 4.7 & 6.7 & 3.7 & 5.6 & 3.6 & 4.8 & 4.0 & 4.43 \\
		JMI & $\scriptstyle \sum_{j=1}^{l-1} \I(\{\X_{i_l},\X_{i_j}\};\Y)$ & 5.1 & 5.2 & 3.0 & 3.7 & 4.2 & 2.3 & 3.8 & 3.5 & 3.85 \\
		CMIM & $\scriptstyle \min_{j=1}^{l-1} \I(\X_{i_l};\Y|\X_{i_j})$ & 3.0 & 2.7 & 4.5 & 3.6 & 3.2 & 4.7 & 2.6 & 5.6 & 3.74 \\
		DISR & $\scriptstyle \sum_{j=1}^{l-1} \I(\{\X_{i_l},\X_{i_j}\};\Y) / \H(\X_{i_l},\X_{i_j},\Y)$ & 7.3 & 5.7 & 4.0 & 3.2 & 3.9 & 2.3 & 6.9 & 5.8 & 4.89 \\
		\midrule
		MRMI & $\scriptstyle \I_\alpha(\{\X_{i_1},\X_{i_2},\cdots,\X_{i_l}\};\Y)$ & \ul{2.6} & \ul{1.8} & \ul{1.2} & \ul{1.7} & \bf{1.5} & \ul{1.8} & \bf{1.3} & \bf{2.0} & \ul{1.74} \\
		LRMI & $\scriptstyle \I_\alpha^k(\{\X_{i_1},\X_{i_2},\cdots,\X_{i_l}\};\Y)$ & \ul{2.6} & \bf{1.4} & \bf{1.1} & \bf{1.6} & \bf{1.5} & \bf{1.5} & \bf{1.3} & \ul{2.1} & \bf{1.64} \\
		\bottomrule
	\end{tabular}
	\caption{Information theoretic feature selection methods and their average rank over different number of features in each dataset. The first and second best performances are marked as \textbf{bold} and \ul{underlined} respectively.}
	\label{tbl_selection}
\end{table*}

\subsubsection{Application to Information Bottleneck} \leavevmode\\
The Information Bottleneck (IB) methods recently achieve great success in compressing redundant or irrelevant information in the inputs and preventing overfitting in deep neural networks. Formally, given network input $\X$ and target label $\Y$, the IB approach tries to extract a compressed intermediate representation $\Z$ from $\X$ that maintains minimal yet meaningful information to predict the task $\Y$ by optimizing the following IB Lagrangian:
\begin{equation*}
    \mathcal{L}_{\textrm{IB}} = \I(\Y, \Z) - \beta \cdot \I(\X, \Z),
\end{equation*}
where $\beta$ is the hyper-parameter that balances the trade-off between \textbf{sufficiency} (predictive performance of $\Z$ on task $\Y$, quantified by $\I(\Y, \Z)$) and \textbf{minimality} (the complexity of $\Z$, quantified by $\I(\X, \Z)$). In practice, optimizing $\I(\Y, \Z)$ is equivalent to the cross-entropy (CE) loss for classification tasks, so our target remains to optimize the latter term $\I(\X, \Z)$. However, mutual information estimation is extremely hard or even intractable for high-dimension distributions, which is usually the case in deep learning. To address this issue, there have been efforts on using variational approximations to optimize a lower bound of $\I(\X, \Z)$, e.g. Variational IB (VIB) \cite{alemi2017deep} and Nonlinear IB (NIB) \cite{kolchinsky2019nonlinear}. We show that with low-rank R\'enyi's entropy, $\I(\X, \Z)$ can be directly optimized by approximating the largest $k$ eigenvalues of the kernel matrix $\A$ constructed by $\X$ and $\Z$. Recall that the Lanczos method constructs an approximation $\A \approx \Q\T\Q^\top$, where $\Q \in \mathbb{R}^{n \times s}$ has orthogonal columns and $\T \in \mathbb{R}^{s \times s}$ is tridiagonal, we have $\hat{\lambda}_i = \lambda_i(\Q^\top\A\Q)$ for all $i \in [1,s]$. Let $\sum_{i=1}^s\hat{\lambda}_i\u_i\u_i^\top$ be the eigenvalue decomposition of $\Q^\top\A\Q$, we can approximate the gradient of $\S_\alpha^k(\A)$ as:
\begin{equation*}
    \frac{\partial\S_\alpha^k(\A)}{\partial\A} \approx \sum_{i=1}^k \frac{\partial\hat{\S}_\alpha^k(\A)}{\partial\hat{\lambda}_i} \cdot \Q\u_i\u_i^\top\Q^\top.
\end{equation*}
In this experiment, we test the performance of matrix-based R\'enyi's IB (MRIB) \cite{yu2021deep} and our low-rank variant (LRIB) with variational approximation-based objectives using VGG16 as the backbone and CIFAR10 as the classification task. All models are trained for $300$ epochs with $100$ batch size and $0.1$ initial learning rate which is divided by $10$ every $100$ epochs. Following the settings in \cite{yu2021deep}, we select $\alpha = 1.01$, $\beta = 0.01$, $k = 10$ and $s = 20$. The final results are reported in Table \ref{tbl_ib}. It can be seen that the matrix-based approaches MRIB and LRIB outperform other methods, while our LRIB achieves the highest performance with significantly less training time.

\subsubsection{Application to Feature Selection} \leavevmode\\
In practical regression or classification machine learning tasks, many features can be completely irrelevant to the learning target or redundant in the context of others. Given a set of features $\S = \{\X_1, \cdots, \X_L\}$ and the target label $\Y$, we aim to find a subset $\S_{sub} \subset \S$ which leverage the expressiveness and the complexity simultaneously. In the field of information theoretic learning, this target is equivalent to maximizing the multivariate mutual information $\I(\S_{sub};\Y)$, which is computationally prohibitive due to the curse of high dimensionality. As a result, there have been tremendous efforts on approximation techniques that retain only the first or second order interactions and build mutual information estimators upon low-dimensional probability distributions, including Mutual Information-based Feature Selection (MIFS) \cite{battiti1994using}, First-Order Utility (FOU) \cite{brown2009new}, Maximum-Relevance Minimum-Redundancy (MRMR) \cite{peng2005feature}, Joint Mutual Information (JMI) \cite{yang1999data}, Conditional Mutual Information Maximization (CMIM) \cite{fleuret2004fast} and Double Input Symmetrical Relevance (DISR) \cite{meyer2006use} which achieve state-of-the-art performance in information-based feature selection tasks.

We evaluate the performance of matrix-based R\'enyi's mutual information (MRMI) and our low-rank variant (LRMI) with these methods on $8$ widely-used classification datasets as shown in Table \ref{tbl_dataset}, which is chosen to cover a broad variety of instance-feature ratios, number of classes and discreteness. Notice that non-R\'enyi methods can only handle discrete features, so we discretize them into $5$ bins under equal-width criterion as adopted in \cite{brown2012conditional}. In this experiment, we choose the Support Vector Machine (SVM) algorithm with RBF kernel ($\sigma = 1$) as the classifier for continuous datasets and a 3-NN classifier for discrete datasets. Following the settings of \cite{yu2019multivariate}, we select $\alpha \in \{0.6, 1.01, 2\}$, $k \in \{100, 200, 400\}$ via cross-validation, $s = k + 50$ and use the Gaussian kernel of width $\sigma = 1$ for matrix-based entropy measures. Considering that it is NP-hard to evaluate each subset of $\S$, we adopt a greedy strategy to incrementally select $10$ features that maximize our target $\I(\S_{sub};\Y)$. That is, in each step, we fix the current subset $\S_{sub} = \{\X_{i_1}, \cdots, \X_{i_{l-1}}\}$ and add a new feature $\X_{i_l} \in \S / \S_{sub}$ to $\S_{sub}$. The average rank of each method across different number of features and the running time of MRMI and LRMI are reported in Table \ref{tbl_selection} and Table \ref{tbl_dataset}.

\begin{table}[t]
	\centering
	\small
	\setlength{\tabcolsep}{0.45em}
	\begin{tabular}{ cccccr@{\hspace{4pt}/\hspace{4pt}}lc }
		\toprule
		Dataset & \#I & \#F & \#C & Discrete & \multicolumn{2}{c}{Time} & Speedup \\
		\midrule
		Breast & 569 & 30 & 2 & No & 0.31 & 0.25 & 1.2 \\
		Semeion & 1593 & 256 & 10 & Yes & 56 & 44 & 1.3 \\
		Madelon & 2600 & 500 & 2 & Yes & 570 & 39 & 14.4 \\
		Krvskp & 3196 & 36 & 2 & Yes & 71 & 11 & 6.6 \\
		Spambase & 4601 & 56 & 2 & No & 353 & 13 & 27.2 \\
		Waveform & 5000 & 40 & 3 & No & 318 & 14 & 22.5 \\
		Optdigits & 5620 & 64 & 10 & Yes & 750 & 41 & 18.4 \\
		Statlog & 6435 & 36 & 6 & Yes & 600 & 23 & 25.7 \\
		\bottomrule
	\end{tabular}
	\caption{Number of instances (\#I), features (\#F), classes (\#C) and discreteness of classification datasets used in feature selection experiments, running time comparison (minutes) of MRMI (left) and LRMI (right), and speedup ratios.}
	\label{tbl_dataset}
\end{table}

As we can see, both  MRMI and LRMI significantly outperform other Shannon entropy based methods. Compared to MRMI, LRMI achieves $6$ to $27$ times speedup, $15$ times on average via Lanczos approximation. Furthermore, LRMI outperforms MRMI on $4$ datasets in our test benchmark, which verifies our theoretical analysis that low-rank R\'enyi's entropy enables higher robustness against noises in the data. This demonstrates the great potential of our low-rank R\'enyi's entropy on information-related tasks.

\section{Conclusion}
In this paper, we investigate an alternative entropy measure built upon the largest $k$ eigenvalues of the data kernel matrix. Compared to the original matrix-based R\'enyi's entropy, our definition enables higher robustness to noises in the data and sensitivity to informative changes in eigenspectrum distribution with a proper choice of hyper-parameter $k$. Moreover, low-rank R\'enyi's entropy can be efficiently approximated with $\mathcal{O}(ns^2)$ random projection and $\mathcal{O}(n^2s)$ Lanczos iteration techniques, substantially lower than the $\mathcal{O}(n^3)$ complexity required to compute matrix-based R\'enyi's entropy. We conduct large-scale simulation and real-world experiments on information bottleneck and feature selection tasks to validate the effectiveness of low-rank R\'enyi's entropy, demonstrating elegant performance with significant improvements in computational efficiency.

\section{Acknowledgments}
This work was supported by National Key Research and Development Program of China (2021ZD0110700), National Natural Science Foundation of China (62106191, 12071166, 62192781, 61721002), the Research Council of Norway (RCN) under grant 309439, Innovation Research Team of Ministry of Education (IRT\_17R86), Project of China Knowledge Centre for Engineering Science and Technology and Project of Chinese Academy of Engineering (The Online and Offline Mixed Educational Service System for The Belt and Road Training in MOOC China).

\section{Supplementary Experimental Results}

\subsection{Parameter Settings of Robustness Experiment}
In the first simulation study, we set $n = 100$, $d = 400$, $\varepsilon = 1/n = 0.01$ and use the linear kernel $\kappa(\x_i, \x_j) = \x_i^\top\x_j$ to generate the kernel matrices. The data samples $\{\x_i\}_{i=1}^n$ are generated by i.i.d Gaussian distribution $N(0, 0.1^2)$. The noise distributions $\mathcal{E}$ are selected following the criterion that $\E[\mathcal{E}] = 0$ and $\Var[\mathcal{E}] = 1$. It can be seen that we get similar results under different noise settings, which further verifies our analysis that when $\varepsilon$ is small, the variance of $\S_\alpha^k(\A)$ mainly depends on the variance of random perturbations of data samples.

\subsection{Additional Results of Approximation Algorithms}
Additionally, we evaluate the impact of $\alpha$ and $c$ on approximation accuracy. We keep the previous $n = 8192$ parameter settings and set $k = 64$. The results of MRE curves with $c = 1.0$ and varying $\alpha$ are reported in Figure \ref{AlphaExp}. For SGS, we set the sparsity hyper-parameter $p=2$. For Lanczos, we randomly select the initial vector $\q$ from standard Gaussian. It can be seen that GRP achieves the lowest MRE amongst all random projection algorithms. When $\alpha$ is small, all MRE curves exhibit similar behavior and grow with the increase of $\alpha$. This behavior starts to differ when $\alpha$ gets larger. For random projection algorithms, MRE keeps at the same level; for Lanczos algorithm, MRE starts to decrease when $\alpha > 2$. Recall that the larger eigenvalues of $\A$ take the main role in the calculation of $\S_\alpha^k(\A)$ for large $\alpha$, this phenomenon indicates that Lanczos algorithm achieves higher precision for larger eigenvalues than smaller ones (as shown in our proof, it requires only $\mathcal{O}(i+\log(1/\epsilon))$ steps to approximate $\lambda_i$ to relative error $1\pm\epsilon$), while random projection achieves similar level of precision for all of the $k$ eigenvalues.

We then evaluate the impact of EDR (c) on approximation accuracy. In Figure \ref{DecayExp}, we report the MRE curves for $\alpha = 2$ while $c$ varies from $0$ (flat) to $2$ (steep). It is interesting that the behavior of the two types of methods is entirely different. For random projections, the MRE curves grow slowly (GRP) or keep unchanged (SRHT, IST and SGS) when $c$ is small, and start to increase at a constant rate when $c$ gets large. This is because $\S_\alpha^k(\A)$ is decreasing along with the increase of $c$, whose slope is low at first and high when $c$ gets large (see Figure \ref{fig_intuitive1}). This results in the slow to fast increasing behavior in relative error since the absolute error is upper bounded (Theorem \ref{th:rp_approx}). For Lanczos method, the ratio $\lambda_1/\lambda_r$ in Theorem \ref{th:lanczos_approx} increases fast when $c$ is small, resulting in the increase of MRE. When $c$ gets large, $\lambda_1$ gradually reaches its upper bound $\lambda_1 \le 1$, while the intervals between adjacent eigenvalues also increase and result in a higher $R$ (Theorem \ref{th:lanczos_approx}). Moreover, recall that Lanczos algorithm approximates larger eigenvalues to higher precision, these reasons together explain the increase and then decrease MRE of the Lanczos approach.

Next, we conduct large-scale experiments to evaluate the approximation algorithms. The kernel matrices are generated by $\A = \PPhi\SSigma\PPhi^\top$, where $\PPhi \in \mathbb{R}^{n \times n}$ is a random orthogonal matrix, $\SSigma \in \mathbb{R}^{n \times n}$ is a diagonal matrix such that $\SSigma_{ii} = i^{-c}$ for $i \in [1,n]$, and $c$ is a constant that controls the EDR. We set size of the kernel matrix $n = 8192$. For random projection methods, we make $s$ vary from $100$ to $1000$; while for Lanczos algorithm, $s$ varies from $64$ to $110$. The mean relative error (MRE) and $\pm\frac{1}{4}$ standard deviation (SD) are reported in Figure \ref{SExp} for each test after $100$ trials with $\alpha = 1.5$ and $c \in \{1.5, 1.0, 0.5\}$, which correspond to high, medium and low EDR respectively. For comparison, the trivial eigenvalue decomposition approach requires $134$ seconds. It can be seen that all random projection methods yield similar approximation accuracy, in which GRP achieves slightly lower MRE when $c = 0.5$ while IST \& SGS bring the highest speedup. The Lanczos method achieves the highest accuracy with significantly lower $s$ values but requires longer running time. Generally, we recommend IST or SGS for medium precision approximation, and Lanczos when high precision is required. These methods achieve more than $25$ times speedup compared to the trivial approach for an $8192 \times 8192$ kernel matrix.

\subsection{Additional Results of Feature Selection}
The hyper-parameter selection result of $\alpha$ and $k$ for matrix-based R\'enyi's entropy and low-rank R\'enyi's entropy in feature selection experiment via cross-validation are shown in Table \ref{tbl_hyper}. As can be seen, $k = 100$ is already suitable for most circumstances. We perform a Nemenyi's post-hoc test \cite{demvsar2006statistical} to give the significant level, in which the confidence that method $i$ significantly outperforms method $j$ is calculated as:
\begin{equation*}
	p_{ij} = \PPhi\prn*{\left.(R_j - R_i) \middle/ \sqrt{\frac{M(M+1)}{6N}}\right.},
\end{equation*}
where $\PPhi$ is the CDF of standard normal distribution, $R_i$ is the average rank of method $i$, $M$ is the number of methods and $N$ is the number of datasets. For our case, we have $M = 8$, $N = 8$ and the value of $R_i$ are given in the last column of table \ref{tbl_selection}. The confidence level of different methods is shown in Figure \ref{fig_confidence}. It can be seen that under significance level $p = 0.05$, LRMI significantly outperforms all Shannon's entropy-based methods, while the confidence of MRMI outperforming CMIM is not significant enough. In Figure \ref{fig_selection}, we report the classification accuracy achieved by different feature selection methods for the first $10$ features. It can be seen that classification error tends to stabilize after selecting the $10$ most informative features. 

\begin{figure}[t]
	\small
	\centering
	\includegraphics[width=0.4\textwidth]{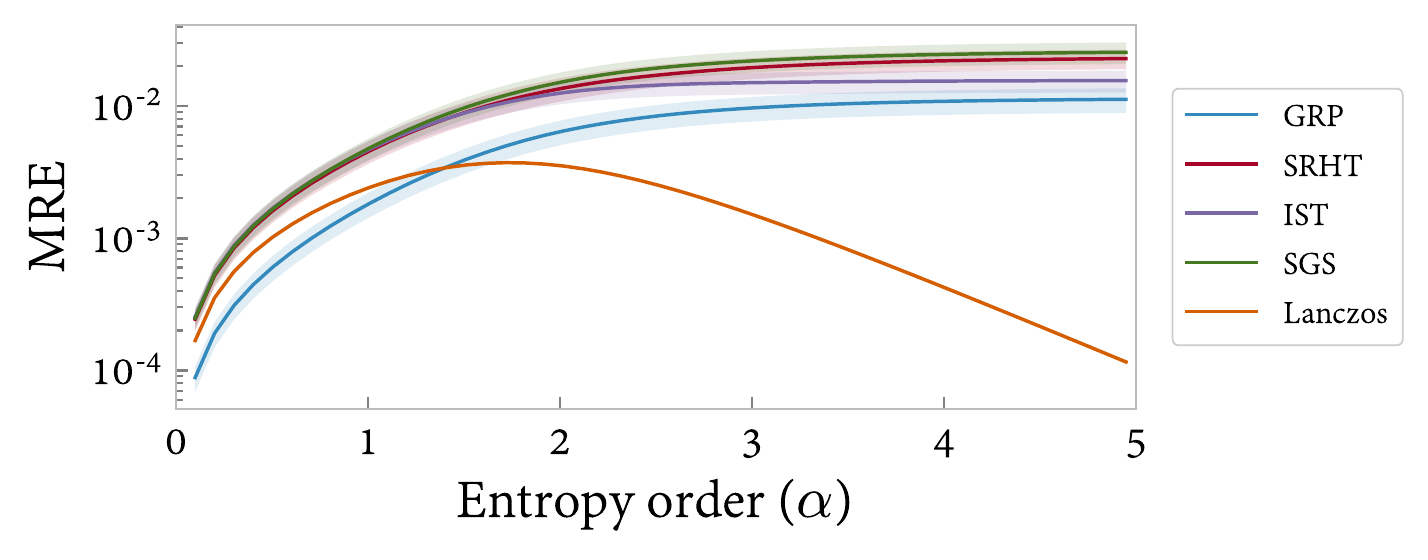}
	\caption{$\alpha$ versus MRE curves for entropy approximation.}
	\label{AlphaExp}
\end{figure}

\begin{figure}[t]
	\small
	\centering
	\includegraphics[width=0.4\textwidth]{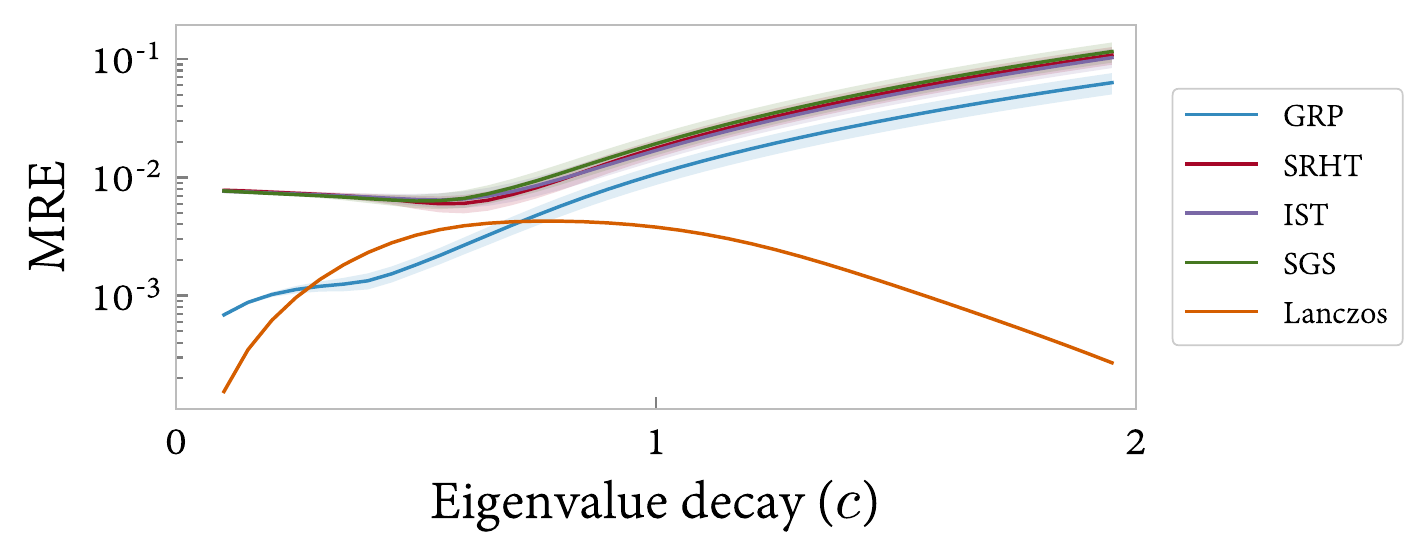}
	\caption{$c$ versus MRE curves for entropy approximation.}
	\label{DecayExp}
\end{figure}

\begin{figure}[t]
	\small
	\centering
	\includegraphics[width=0.45\textwidth]{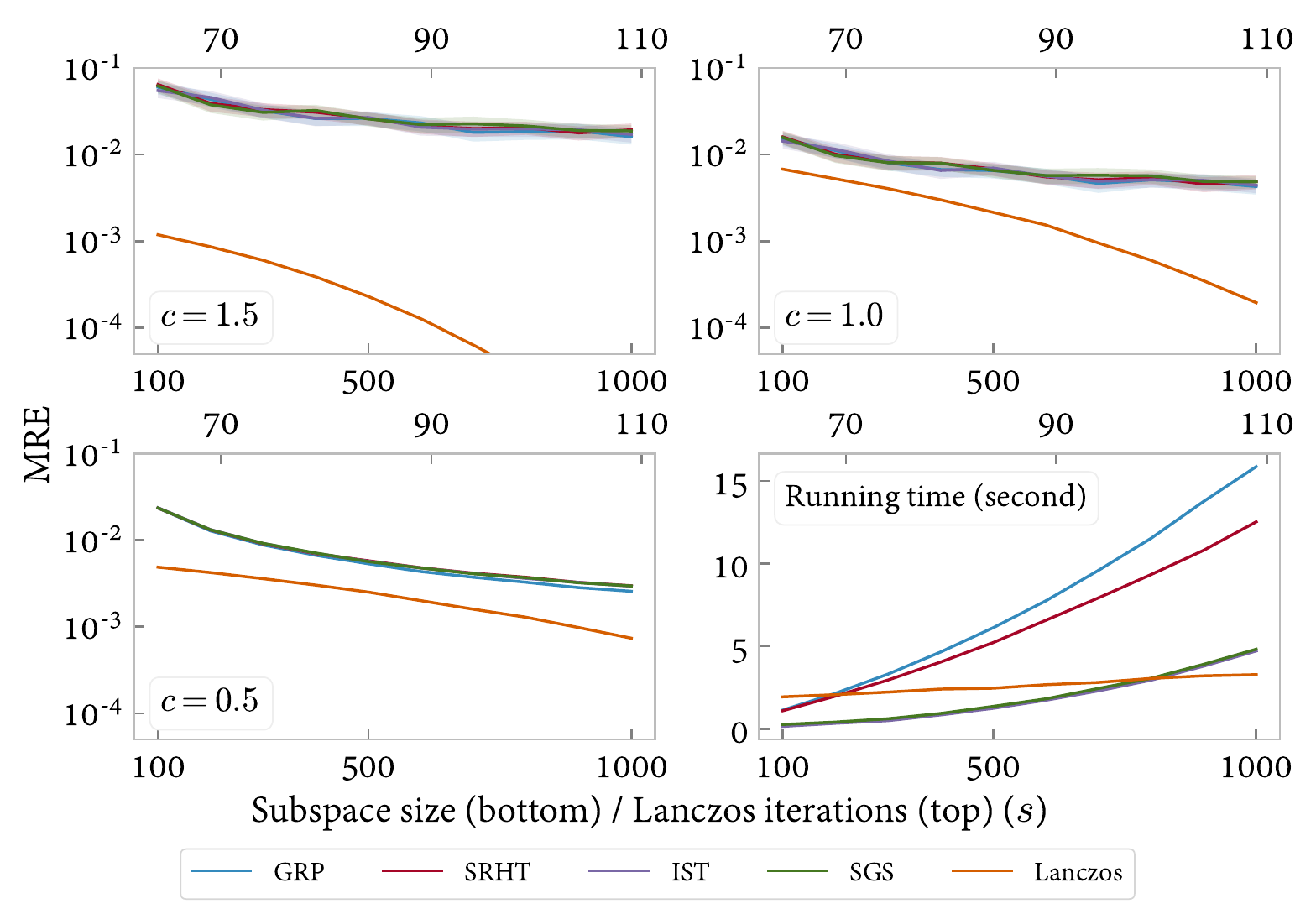}
	\caption{$s$ versus MRE curves for entropy approximation. The first three sub-figure correspond to different $c$ values, while the last sub-figure show the running time.}
	\label{SExp}
\end{figure}

\begin{table}[ht]
	\centering
	\begin{tabular}{ ccc }
		\toprule
		Dataset & $\alpha$ & $k$ \\
		\midrule
		Breast & 2.0 & 100 \\
		Semeion & 1.01 & 400 \\
		Madelon & 2.0 & 100 \\
		Krvskp & 1.01 & 200 \\
		Spambase & 2.0 & 100 \\
		Waveform & 2.0 & 100 \\
		Optdigits & 1.01 & 200 \\
		Statlog & 0.6 & 100 \\
		\bottomrule
	\end{tabular}
	\caption{Hyper-parameter selection results of $\alpha$ and $k$ in feature selection experiment.}
	\label{tbl_hyper}
\end{table}

\begin{figure}[t]
	\small
	\centering
	\includegraphics[width=0.3\textwidth]{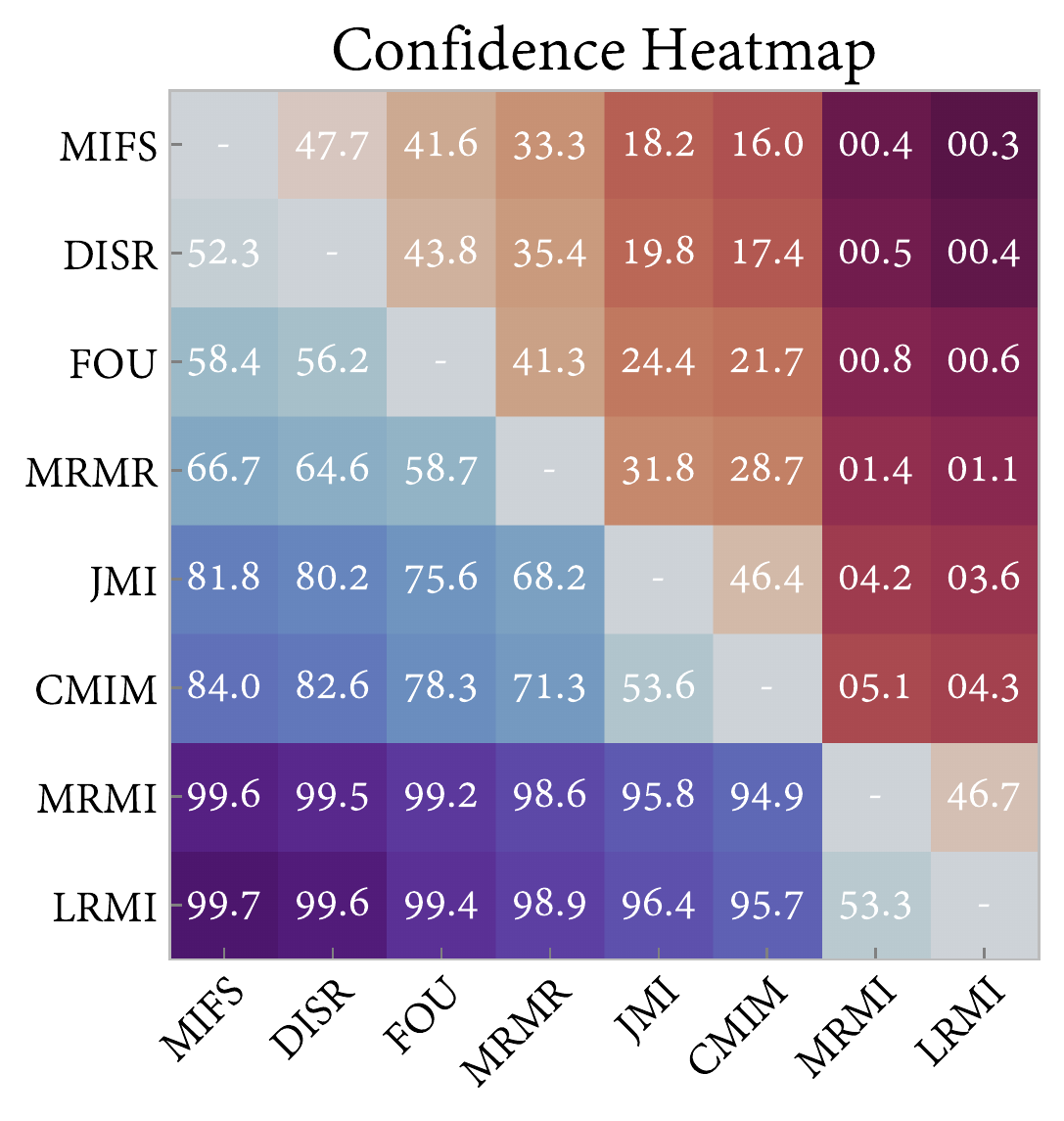}
	\caption{Confidence of significant outperforming (\%) for different feature selection methods.}
	\label{fig_confidence}
\end{figure}

\begin{figure*}[t]
	\small
	\centering
	\includegraphics[width=0.9\textwidth]{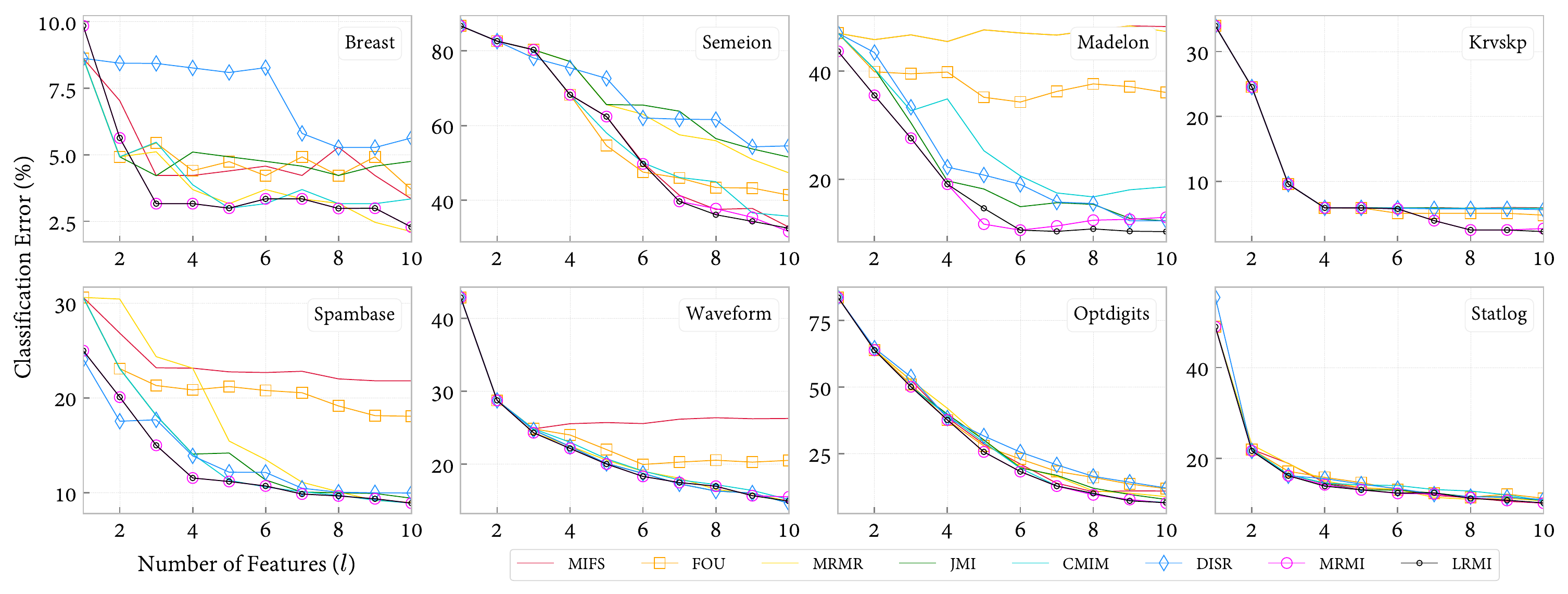}
	\caption{Number of Features ($l$) versus Classification Error (\%) curves for different feature selection methods.}
	\label{fig_selection}
\end{figure*}

\section{Proof of Main Results}
\subsection{Proof of Proposition \ref{prop_lowrank}}
\begin{proof}
	For (a): Let $\A = \U\LLambda\U^\top$ be the eigenvalue decomposition of $\A$, then $\P\U$ is a unitary matrix and $\lambda_i(\A) = \lambda_i(\P\A\P^\top)$ for all $i \in [1,k]$.
	
	For (b): When $p > 0$, $\tr\prn*{(pL_k(\A))^\alpha} = p^\alpha \cdot \tr(L_k^\alpha(\A)) > 0$, then (b) follows by the continuity of the logarithm function.
	
	For (c): Notice that $\S_\alpha^k(\A) = \S_\alpha(L_k(\A))$, where $\tr(L_k(\A)) = 1$ and $\lambda_i(L_k(\A)) \in [0,1]$ for all $i \in [1,n]$. Then we have $\tr(L_k^\alpha(\A)) \ge 1$ when $\alpha \in (0, 1)$ and $\tr(L_k^\alpha(\A)) \le 1$ when $\alpha > 1$, which further implies that $\S_\alpha^k(\A) \ge 0$.
	
	Let $f(x) = x^\alpha$, it is obvious that $f$ is concave when $\alpha \in (0,1)$ and convex when $\alpha > 1$. Then by Jensen's inequality, $\tr(f(L_k(\A))) \le \tr(f(\frac{1}{n}I))$ when $\alpha \in (0,1)$ and otherwise the opposite, which further implies that $\S_\alpha^k(\A) \le \S_\alpha^k(\frac{1}{n}I)$.
	
	Moreover, it is straightforward to show that $\S_\alpha^k(\frac{1}{n}I) = \S_\alpha(\frac{1}{n}I) = \log_2(n)$.
	
	For (d): From Proposition 4.1 in \cite{giraldo2014measures} we have that $\S_\alpha(\A \otimes \B) = \S_\alpha(\A) + \S_\alpha(\B)$, therefore $\S_\alpha(L_k(\A) \otimes L_k(\B)) = \S_\alpha(L_k(\A)) + \S_\alpha(L_k(\B))$. Notice that the smaller $(n-k)^2$ eigenvalues of $L_k(\A) \otimes L_k(\B)$ are equal to $\lambda_r(\A)\lambda_r(\B)$, we have $\S_\alpha^{n^2-(n-k)^2}(L_k(\A) \otimes L_k(\B)) = \S_\alpha^k(L_k(\A)) + \S_\alpha^k(L_k(\B))$.
	
	For (e): From Proposition 4.1 in \cite{giraldo2014measures} we have that $\S_\alpha(t\A_k + (1-t)\B_k) = g^{-1}\big(tg(\S_\alpha(\A)) + (1-t)g(\S_\alpha(\B))\big)$. Notice that $\A_k = \A$ when $\tr(\A_k) = 1$, we have $\S_\alpha^k(t\A + (1-t)\B) = g^{-1}\big(tg(\S_\alpha^k(\A)) + (1-t)g(\S_\alpha^k(\B))\big)$.
	
	For (f): From the proof of Proposition 4.1 in \cite{giraldo2014measures} we have that
	\begin{equation*}
		\sum_{i=1}^t \lambda_i(\A\circ\B) \le \frac{1}{n}\sum_{i=1}^t \lambda_i(\B),
	\end{equation*}
	where $t$ is any integer in $[1,n]$. Therefore
	\begin{align*}
		\sum_{i=1}^t \lambda_i(L_k(\A\circ\B)) &\le \frac{1}{n}\sum_{i=1}^t \lambda_i(L_k(\B)),\quad \forall t \in [1,k], \\
		\sum_{i=1}^n \lambda_i(L_k(\A\circ\B)) &= \frac{1}{n}\sum_{i=1}^n \lambda_i(L_k(\B)) = \frac{1}{n},
	\end{align*}
	From the case $t = k$ we know that $\lambda_r(\A\circ\B) \ge \lambda_r(\B)/n$, therefore for any $t \in [k + 1, n]$, we have
	\begin{align*}
		\sum_{i=1}^t \lambda_i(L_k(\A\circ\B)) &= \frac{1}{n} - (n-t)\lambda_r(\A\circ\B) \\
		&\le \frac{1}{n} - \frac{n-t}{n}\lambda_r(\B) \\
		&= \frac{1}{n}\sum_{i=1}^t \lambda_i(L_k(\B)).
	\end{align*}
	Then we can prove that
	\begin{align*}
		\S_\alpha^k\prn*{\frac{\A\circ\B}{\tr(\A\circ\B)}} &= \S_\alpha\prn*{L_k\prn*{\frac{\A\circ\B}{\tr(\A\circ\B)}}}\\
		&\ge \S_\alpha(L_k(\B)) = \S_\alpha^k(\B)
	\end{align*}
	following the proof in \cite{giraldo2014measures}.
	
	For (g): From the proof of Proposition 4.1 in \cite{giraldo2014measures}, when $\A = \frac{1}{n}\1\1^\top$ and $\A = \frac{1}{n}I$, we have
	\begin{align*}
		\sum_{i=1}^t \lambda_i(\A\circ\B) &\le \frac{1}{n}\sum_{i=1}^t \lambda_i(\B)\quad \mathrm{and} \\
		\frac{1}{n}\sum_{i=1}^t \lambda_i(\A\circ\B) &\le \frac{1}{n}\sum_{i=1}^t \lambda_i(\B)
	\end{align*}
	respectively, where $t$ is any integer in $[1,n]$. Similar with the proof of (f), for these two extreme cases we can prove that
	\begin{align*}
		\sum_{i=1}^t \lambda_i(L_k(\A\circ\B)) &\le \frac{1}{n}\sum_{i=1}^t \lambda_i(L_k(\B))\quad \mathrm{and} \\
		\frac{1}{n}\sum_{i=1}^t \lambda_i(L_k(\A\circ\B)) &\le \frac{1}{n}\sum_{i=1}^t \lambda_i(L_k(\B))
	\end{align*}
	respectively. These inequalities imply that
	\begin{equation*}
		\S_\alpha\prn*{L_k\prn*{\frac{\A\circ\B}{\tr(\A\circ\B)}}} \le \S_\alpha(L_k(\A)) + \S_\alpha(L_k(\B))
	\end{equation*}
	following the proof in \cite{giraldo2014measures}.
\end{proof}

\subsection{Proof of Theorem \ref{th:robust}}
\begin{proof}
	Without loss of generality, we assume $\mu_1 \ge \mu_2 \ge \cdots \ge \mu_n$. Note that $\lambda_i$, $i \in [1,n]$ may not be monotonically decreasing. By the definition of information potential, we have
	\begin{gather*}
		\IP_\alpha(\B) = \sum_{i=1}^n \mu_i^\alpha, \\
		\IP_\alpha^k(\B) = \sum_{i=1}^k \mu_i^\alpha + (n-k) \mu_r^\alpha, \\
		\mu_r = \frac{1}{n-k}\prn*{1 - \sum_{i=1}^k \mu_i}.
	\end{gather*}
	When $\nu_i$ is small, we have the following first-order approximation:
	\begin{align*}
		\mu_i^\alpha &= (\lambda_i+\nu_i)^\alpha \\
		&= \lambda_i^\alpha + \alpha\lambda_i^{\alpha-1}\nu_i + \frac{\alpha(\alpha-1)}{2}\lambda_i^{\alpha-2}\nu_i^2 + \cdots \\
		&= \lambda_i^\alpha + \alpha\lambda_i^{\alpha-1}\nu_i + o(\nu_i).
	\end{align*}
	Therefore
	\begin{align*}
		\Var[\IP_\alpha(\B)] &= \Var[\IP_\alpha(\B) - \IP_\alpha(\A)] \\
		&= \Var\brk*{\sum_{i=1}^n \mu_i^\alpha - \lambda_i^\alpha} \\
		&= \Var\brk*{\sum_{i=1}^n \alpha\lambda_i^{\alpha-1}\nu_i + o(\nu_i)} \\
		&\approx \alpha^2\sum_{i=1}^n \Var\brk*{\lambda_i^{\alpha-1}\nu_i} \\
		&= \alpha^2\sum_{i=1}^n \sigma_i^2\lambda_i^{2(\alpha-1)}.
	\end{align*}
	Similarly, we have
	\begin{align*}
		\Var[\IP_\alpha^k(\B)] &= \Var[\IP_\alpha^k(\B) - \IP_\alpha^k(\A)] \\
		&= \Var\brk*{\sum_{i=1}^k \prn*{\mu_i^\alpha - \lambda_i^\alpha} + (n-k)(\mu_r^\alpha - \lambda_r^\alpha)} \\
		&= \Var\Bigg[\sum_{i=1}^k \prn*{\alpha\lambda_i^{\alpha-1}\nu_i + o(\nu_i)} \\
		&\qquad - \alpha(n-k)\lambda_r^{\alpha-1}\cdot \frac{1}{n-k}\sum_{i=1}^k \nu_i \Bigg].
	\end{align*}
	When $\alpha \in (0, 1)$, i.e. $\alpha-1 < 0$, we have $\lambda_r^{\alpha-1} \ge \lambda_i^{\alpha-1}$ for $i \in [1, k]$. Therefore
	\begin{align}
		\Var[\IP_\alpha^k(\B)] &\le \Var\brk*{\alpha\lambda_r^{\alpha-1}\sum_{i=1}^k \nu_i} \nonumber\\
		&= \alpha^2\lambda_r^{2(\alpha-1)}\sum_{i=1}^k\sigma_i^2 \nonumber\\
		&\le \alpha^2\sum_{i=k+1}^n \sigma_i^2\lambda_i^{2(\alpha-1)} \frac{\sum_{i=1}^k\sigma_i^2}{\sum_{i=k+1}^n\sigma_i^2} \label{eq:jensen}\\
		&\le \Var[\IP_\alpha(\B)]. \nonumber
	\end{align}
	(\ref{eq:jensen}) follows by Jensen's inequality using the fact that $\lambda_r = \frac{1}{n-k} \sum_{i=k+1}^n \lambda_i$ and $\sigma_i$ are non-negative, since the function $f(x) = x^{2(\alpha-1)}$ is convex.
	
	Otherwise when $\alpha > 1$, we have $\lambda_r^{\alpha-1} \le \lambda_i^{\alpha-1}$ for $i \in [1, k]$. Therefore
	\begin{align*}
		\Var[\IP_\alpha^k(\B)] &\le \Var\brk*{\alpha\sum_{i=1}^k \lambda_i^{\alpha-1}\nu_i} \\
		&= \alpha^2\sum_{i=1}^k\sigma_i^2\lambda_i^{2(\alpha-1)} \\
		&\le \Var[\IP_\alpha(\B)].
	\end{align*}
	This completes the proof.
\end{proof}

\subsection{Uniqueness of Low-rank R\'enyi's Entropy}
Let $\S_\alpha^k(\A)$ be a measure of entropy defined on the largest $k$ eigenvalues of $\A$. Then $\S_\alpha^k(\A)$ must adopt some strategy to build a probability distribution upon known eigenvalues, i.e. let the summation of all eigenvalues be exactly $1$, otherwise $\S_\alpha^k(\A)$ will not be continuous at $\alpha=1$. One choice is to adopt some strategy to complement the missing eigenvalues. Let $L_k(\A)$ be the complemented matrix, we have $\lambda_i(L_k(\A)) = \lambda_i(\A), \forall i \in [1,k]$, $\lambda_n(L_k(\A)) \le \cdots \le \lambda_{k+1}(L_k(\A)) \le \lambda_k(\A)$ and $\tr(L_k(\A)) = 1$.

Let $F_\A(t)$ be the CDF of $\A$: $F_\A(t) = \sum_{i=1}^t \lambda_i(\A)$, and let $\lambda_r(\A) = \frac{1}{n-k}(1 - \sum_{i=1}^k \lambda_i(\A))$. Then we have
\begin{equation} \label{eq_unique_lower}
	F_{L_k(\A)}(t) \ge \sum_{i=1}^k \lambda_i(\A) + (t-k) \lambda_r(\A)
\end{equation}
for all $t \in [k+1,n]$ since the function $F_\A$ is always concave. Let $\B \in \mathbb{R}^{n \times n}$ be a PSD matrix satisfying
\begin{equation*}
	\sum_{i=1}^t \lambda_i(\A) \le \sum_{i=1}^t \lambda_i(\B)
\end{equation*}
for all $t \in [1,n]$, then in order to maintain the triangle inequality (axiom (f) and (g)), The function $F_\A$ must satisfy $F_{L_k(\A)}(t) \le F_{L_k(\B)}(t)$, $\forall t \in [k+1,n]$. Construct $\B$ by letting $\lambda_1(\B) = 1 - (n-1)\lambda_r(\A)$ and $\lambda_2(\B) = \cdots = \lambda_n(\B) = \lambda_r(\A)$, then combining with the fact that $\lambda_i(L_k(\B)) \le \lambda_k(\B)$, $\forall i \in [k+1,n]$, we have
\begin{equation} \label{eq_unique_upper}
	F_{L_k(\A)}(t) \le F_{L_k(\B)}(t) = \sum_{i=1}^k \lambda_i(\A) + (t-k) \lambda_r(\A).
\end{equation}
Eq. (\ref{eq_unique_lower}) and (\ref{eq_unique_upper}) together imply that taking $\lambda_i(L_k(\A)) = \lambda_r(\A)$ for all $i \in [k+1,n]$ is the only choice that fulfills all axioms in Proposition \ref{prop_lowrank}.

Another reasonable choice to normalize the probability distribution is to scale the known largest $k$ eigenvalues:
\begin{equation*}
	\S_\alpha^k(\A) = \frac{1}{1-\alpha}\log_2 \prn*{\sum_{i=1}^k \prn*{\frac{\lambda_i(\A)}{\sum_{i=1}^n \lambda_i(\A)}}^\alpha},
\end{equation*}
or
\begin{equation*}
	\S_\alpha^k(\A) = \frac{1}{1-\alpha}\log_2 \prn*{\frac{\sum_{i=1}^k \lambda_i^\alpha(\A)}{\sum_{i=1}^n \lambda_i(\A)}}.
\end{equation*}
However, these methods do not fulfill the triangle inequality, i.e. we cannot infer $\S_\alpha^k(\A) \ge \S_\alpha^k(\B)$ from the condition that $F_{\A}(t) \le F_{\B}(t)$, $\forall t \in [1,k]$. This results in violations of axioms (f) and (g).

\subsection{Proof of Theorem \ref{th:rp_approx}}
We first present the $\ell_2$ embedding results for RGP, SRHT, IST and SGS in Lemma \ref{lemma_RGP}, \ref{lemma_hadamard}, \ref{lemma_sparse} and \ref{lemma_bipartite} respectively, where the dimension of embedding subspace is given to guarantee the $\epsilon$ error. Lemma \ref{lemma_permutation} presents the permutation bound for symmetric positive definite matrix. All these theoretical results are helpful to our proof.
\begin{lemma} \label{lemma_RGP}
	\cite{foucart13} Let $\U \in \mathbb{R}^{n \times k } $ such that $\U^\top \U = \I_k$ and $\P \in \mathbb{R}^{n \times s}$ constructed by GRP. Then, with probability at least  $1 - \delta$, 
	\begin{equation*}
		\left\|   \U^\top \P \P^\top \U - \I_k \right\|_2 \leq \epsilon,
	\end{equation*}
	by setting $s = \mathcal{O} \left(  k + \log(1/\delta) /\epsilon^2 \right)$.
\end{lemma}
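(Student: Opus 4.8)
The statement is a standard oblivious $\ell_2$-subspace-embedding (Johnson--Lindenstrauss) bound, and the plan is to reduce it to a one-dimensional concentration estimate by rotation invariance and then lift that to a uniform spectral bound by a covering argument. First I would observe that since $\U$ has orthonormal columns, a Gaussian column $\g\sim N(\0,\sigma^2\I_n)$ of the projection satisfies $\U^\top\g\sim N(\0,\sigma^2\I_k)$, because $\mathrm{Cov}(\U^\top\g)=\sigma^2\U^\top\U=\sigma^2\I_k$; applying this columnwise shows that $\M:=\U^\top\P\in\mathbb{R}^{k\times s}$ has the same (scaled) i.i.d.\ Gaussian law as a $k\times s$ Gaussian matrix, so that $\U^\top\P\P^\top\U=\M\M^\top$ and the claim becomes $\norm{\M\M^\top-\I_k}_2\le\epsilon$, i.e.\ every singular value $\sigma$ of $\M$ obeys $\abs{\sigma^2-1}\le\epsilon$. (For the orthonormalized-frame version of GRP used in the paper the same reduction applies, with $\M\M^\top$ replaced by $(n/s)\,\U^\top\Pi_{\mathcal{V}}\U$, the Gram matrix of the columns of $\U$ compressed by a uniformly random $s$-dimensional subspace $\mathcal{V}$; only the scalar concentration law in the next step changes, from $\chi^2$-type to Beta-type, and the conclusion is unchanged.)

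Second, for a \emph{fixed} unit vector $\x\in S^{k-1}$ the quantity $\norm{\M\x}_2^2$ is, after normalization, a $\chi^2_s/s$ variable with mean $1$ and sub-exponential tail $\Pr\!\left[\abs{\norm{\M\x}_2^2-1}>\eta\right]\le 2e^{-cs\eta^2}$ for $\eta\in(0,1)$. I would upgrade this pointwise bound to a uniform one by taking a $\rho$-net $\mathcal{N}$ of $S^{k-1}$ with $\abs{\mathcal{N}}\le(1+2/\rho)^k$, union-bounding so that $\abs{\norm{\M\x}_2^2-1}\le\eta$ holds simultaneously for all $\x\in\mathcal{N}$, and then running the standard net-to-sphere argument (write any $\x\in S^{k-1}$ as a net point plus a small residual and bound the bilinear form of $\M^\top\M-\I_k$) to obtain $\norm{\M^\top\M-\I_k}_2\le\epsilon$ once $\eta,\rho$ are taken as suitable constant multiples of $\epsilon$. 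The union bound fails with probability at most $(1+2/\rho)^k\cdot 2e^{-cs\eta^2}$, and forcing this to be $\le\delta$ requires $cs\eta^2\ge k\log(1+2/\rho)+\log(2/\delta)$, which after $\eta,\rho=\Theta(\epsilon)$ yields $s=\mathcal{O}\big((k+\log(1/\delta))/\epsilon^2\big)$, as claimed.

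An equivalent, shorter route replaces the second paragraph by the classical non-asymptotic bounds on the extreme singular values of a Gaussian matrix (Davidson--Szarek, Gordon) together with the fact that singular values are $1$-Lipschitz functions of the entries, hence Gaussian-concentrated: these give $1-\sqrt{k/s}-t/\sqrt s\le\sigma_{\min}(\M)\le\sigma_{\max}(\M)\le 1+\sqrt{k/s}+t/\sqrt s$ with probability $1-2e^{-t^2/2}$, and choosing $t=\sqrt{2\log(2/\delta)}$ with $s$ large enough that $\sqrt{k/s}+t/\sqrt s\le\epsilon/3$ produces the same sample complexity (using that $(1\pm\epsilon/3)^2$ sandwiches $1\pm\epsilon$).

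The main difficulty is bookkeeping rather than any single estimate: keeping the sample size \emph{additive} in $k$ and $\log(1/\delta)$ forces one to balance the net resolution $\rho$ (costing $k\log(1/\rho)$ in the exponent) against the deviation level $\eta$ (which must be a constant fraction of $\epsilon$ to survive the net-to-sphere step), and one must control \emph{both} ends of the spectrum of $\M\M^\top$, since $\norm{\M\M^\top-\I_k}_2$ sees the largest and the smallest eigenvalue alike. Since the lemma is quoted verbatim from \cite{foucart13}, the cleanest write-up simply cites the corresponding theorem there; the sketch above is the self-contained argument it rests on.
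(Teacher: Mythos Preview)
Your proposal is correct. The paper itself does not prove this lemma at all: it simply states it with a citation to \cite{foucart13} and uses it as a black box in the proof of Theorem~\ref{th:rp_approx}. You anticipated this in your final sentence, and your self-contained sketch (rotation invariance to reduce $\U^\top\P$ to a $k\times s$ Gaussian, then either $\chi^2$ concentration with an $\epsilon$-net or the Davidson--Szarek/Gordon singular value bounds) is exactly the standard argument underlying the cited result, so there is nothing to correct.
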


\begin{lemma} \label{lemma_hadamard}
	\cite{drineas2012fast} 	Let $\U \in \mathbb{R}^{n \times k } $ such that $\U^\top \U = \I_k$ and $\P \in \mathbb{R}^{n \times s}$ constructed by SRHT. Then, with probability at least 0.9, 
	\begin{equation*}
		\left\|  \frac{n}{k} \U^\top \P \P^\top \U - \I_k \right\|_2 \leq \epsilon,
	\end{equation*}
	by setting $s = \mathcal{O} \left( (k+ \log n) \frac{\log k}{\epsilon^2}\right)$.
\end{lemma}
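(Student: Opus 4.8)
The plan is to establish this as a subspace-embedding guarantee for the subsampled randomized Hadamard transform (SRHT), following \cite{drineas2012fast}, by exploiting the factorization $\P = \sqrt{1/s}\,\D\H\S$ into a randomized orthogonal \emph{mixing} operator $\D\H$ (the sign diagonal $\D$ composed with the Hadamard matrix $\H$) and a uniform \emph{subsampling} operator $\S$. Set $\tilde{\U} = \H\D\U$; since $\H$ and $\D$ are, up to the normalization convention of \cite{drineas2012fast}, orthogonal, $\tilde{\U}$ still has orthonormal columns, and $\frac{n}{k}\U^\top\P\P^\top\U$ equals a rescaled Gram matrix $\frac{1}{s}\sum_{t=1}^s \tilde{\U}_{(i_t)}^\top \tilde{\U}_{(i_t)}$ built from $s$ uniformly sampled rows $i_1,\dots,i_s$ of $\tilde{\U}$, normalized so that its expectation is $\I_k$. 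The argument then splits into two steps: a \emph{flattening} step controlling the row norms of $\tilde{\U}$, and a \emph{sampling} step showing the Gram estimate concentrates.

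For the flattening step, I would bound the leverage scores $\norm*{\tilde{\U}_{(i)}}_2^2$ of $\tilde{\U}$. Each entry $\tilde{\U}_{ij} = \sum_\ell \H_{i\ell}\D_{\ell\ell}\U_{\ell j}$ is a Rademacher sum of the fixed coefficients $(\H_{i\ell}\U_{\ell j})_\ell$, whose contributions are uniform because $\abs{\H_{i\ell}}$ is constant in $\ell$. A Hoeffding/Khintchine concentration bound therefore controls each $\abs{\tilde{\U}_{ij}}$, and a union bound over the $n$ rows (producing the $\log n$ term) yields $\max_i \norm*{\tilde{\U}_{(i)}}_2^2 = \mathcal{O}\big((k + \log n)/n\big)$ with high probability. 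This is precisely the incoherence property that makes uniform sampling of $\tilde{\U}$ as effective as leverage-score sampling.

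For the sampling step, I would write the rescaled Gram matrix as a sum of $s$ independent, identically distributed rank-one terms $\X_t = \frac{1}{s}\,\tilde{\U}_{(i_t)}^\top\tilde{\U}_{(i_t)}$ with $\sum_t \mathbb{E}[\X_t] = \I_k$, and invoke a matrix Chernoff inequality. The flattening bound supplies the per-term operator-norm control $\norm*{\X_t}_2 = \mathcal{O}\big((k + \log n)/s\big)$ needed as the boundedness input; matrix Chernoff then gives failure probability $\mathcal{O}\big(k\exp(-c\,\epsilon^2 s/(k + \log n))\big)$ for the deviation $\norm*{\sum_t \X_t - \I_k}_2 > \epsilon$. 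Choosing $s = \mathcal{O}\big((k + \log n)\log k/\epsilon^2\big)$ drives this below $0.1$, and combining it with the flattening event through a union bound yields the claimed probability at least $0.9$.

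The main obstacle is the careful bookkeeping at the interface of the two steps: the matrix Chernoff bound must be applied \emph{conditionally} on the high-probability flattening event, and one must track how the dimension factor $k$ in the matrix Chernoff tail combines with the $(k+\log n)$ leverage bound to yield precisely the stated $s = \mathcal{O}((k+\log n)\log k/\epsilon^2)$ — in particular securing the extra $\log k$ factor from the matrix concentration while retaining the $\log n$ from the union over rows. Since the result is quoted directly from \cite{drineas2012fast}, the remaining details (the exact normalization constants and the with/without-replacement sampling model) follow that reference.
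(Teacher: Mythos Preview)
The paper does not supply its own proof of this lemma: it is stated purely as a citation to \cite{drineas2012fast} and then used as a black box in the proof of Theorem~\ref{th:rp_approx}. Your sketch correctly reconstructs the two-stage argument from that reference --- the Hoeffding-plus-union-bound flattening of leverage scores under $\D\H$ (yielding the $\log n$) followed by a matrix Chernoff bound on the uniformly subsampled Gram matrix (yielding the $\log k$) --- so there is nothing to compare against beyond noting that you have filled in what the paper deliberately outsourced.
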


\begin{lemma} \label{lemma_sparse}
	\cite{woodruff2014sketch} Let $\U \in \mathbb{R}^{n \times k } $ such that $\U^\top \U = \I_k$ and $\P \in \mathbb{R}^{n \times s}$ constructed by IST. Then, with probability at least 0.9, 
	\begin{equation*}
		\left\|   \U^\top \P \P^\top \U - \I_k \right\|_2 \leq \epsilon,
	\end{equation*}
	by setting $s = \mathcal{O} \left(  k^2 /\epsilon^2 \right)$.
\end{lemma}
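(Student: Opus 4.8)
The plan is to control the spectral deviation through the Frobenius norm. Writing $\M = \U^\top\P\P^\top\U - \I_k$, I would use $\norm{\M}_2 \le \norm{\M}_F$ and then bound $\norm{\M}_F$ in probability via its second moment together with Markov's inequality. Because the target confidence is a fixed constant $0.9$ rather than $1-\delta$, a second-moment argument suffices and no higher moments or tail-boosting are needed; this is exactly why the sample complexity comes out as $\mathcal{O}(k^2/\epsilon^2)$ rather than carrying a $\log(1/\delta)$ factor as in GRP.

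First I would record that the IST sketch is \emph{unbiased} as a subspace embedding, working through its hashing and sign structure: each input coordinate $i$ carries an independent random sign $\sigma_i \in \{\pm 1\}$ and is routed to one of the $s$ buckets by a (pairwise-independent) hash $h$. A direct computation gives $\mathbb{E}[\P\P^\top] = \I_n$, hence $\mathbb{E}[\U^\top\P\P^\top\U] = \U^\top\U = \I_k$ and $\mathbb{E}[\M] = \0$. The diagonal ($i=i'$) part reproduces $\U^\top\U = \I_k$ exactly, so $\M$ collapses to the purely off-diagonal sum $\M_{ab} = \sum_{i\ne i'}\U_{ia}\U_{i'b}\,\sigma_i\sigma_{i'}\,\mathbf{1}[h(i)=h(i')]$.

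The core of the argument is the second-moment estimate $\mathbb{E}[\norm{\M}_F^2] = \mathcal{O}(k^2/s)$. Expanding $\norm{\M}_F^2 = \sum_{a,b}\M_{ab}^2$ produces a sum over quadruples $(i,i',j,j')$, and the key observation is that $\mathbb{E}[\sigma_i\sigma_{i'}\sigma_j\sigma_{j'}]$ vanishes unless the indices pair up. With $i\ne i'$ and $j\ne j'$, only the two matchings $(i,i')=(j,j')$ and $(i,i')=(j',j)$ survive, each leaving a collision probability $\mathbb{E}[\mathbf{1}[h(i)=h(i')]] = 1/s$. This is precisely where the random signs and the hash independence are used to annihilate the cross terms. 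Carrying out the bucket sum and then summing over $a,b$ yields $\mathbb{E}[\norm{\M}_F^2] = \frac{1}{s}\sum_{i\ne i'}\prn*{\rho_i\rho_{i'} + \ang{\u_i,\u_{i'}}^2}$, where $\u_i \in \mathbb{R}^k$ is the $i$-th row of $\U$ and $\rho_i = \norm{\u_i}^2$.

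Finally I would simplify using orthonormality: $\sum_{i\ne i'}\rho_i\rho_{i'} \le \prn*{\sum_i\rho_i}^2 = \norm{\U}_F^4 = k^2$, and $\sum_{i\ne i'}\ang{\u_i,\u_{i'}}^2 \le \norm{\U\U^\top}_F^2 = \tr\prn*{(\U^\top\U)^2} = k$, giving $\mathbb{E}[\norm{\M}_F^2] \le 2k^2/s$. Markov's inequality then yields $\Pr[\norm{\M}_2 > \epsilon] \le \Pr[\norm{\M}_F^2 > \epsilon^2] \le 2k^2/(s\epsilon^2)$, which is at most $0.1$ once $s = \Omega(k^2/\epsilon^2)$, establishing the claim. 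I expect the main obstacle to be the bookkeeping in the second-moment computation: correctly determining which sign/hash pairings survive in expectation and recognizing the two resulting index sums as $\norm{\U}_F^4$ and $\norm{\U\U^\top}_F^2 = \tr((\U^\top\U)^2)$. Once those two quantities are identified, the orthonormality reduction and the Markov step are routine.
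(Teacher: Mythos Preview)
Your argument is correct and is essentially the standard second-moment/Markov proof for the CountSketch subspace-embedding guarantee. The paper itself does not supply a proof of this lemma: it is quoted as a known result from \cite{woodruff2014sketch} and used as a black box inside the proof of Theorem~\ref{th:rp_approx}. What you have written is, up to notation, exactly the argument one finds in that reference --- unbiasedness of $\P\P^\top$, reduction of $\M$ to the off-diagonal hash-collision sum, a second-moment computation in which only the two sign pairings $(i,i')=(j,j')$ and $(i,i')=(j',j)$ survive, identification of the resulting sums with $\norm{\U}_F^4$ and $\norm{\U\U^\top}_F^2$, and a final Markov step. So there is nothing to compare: you have reconstructed the cited proof.

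One small caveat worth noting: the paper's textual description of IST as $\P = \sqrt{n/s}\,\D\S$ with $\S$ a column-\emph{subsampling} matrix does not quite match the standard Clarkson--Woodruff sparse embedding (where each \emph{row} of $\P$ carries a single random $\pm 1$ placed by a hash $h:[n]\to[s]$). Your proof is written for the latter, which is the construction actually analyzed in \cite{woodruff2014sketch} and the one for which the $s=\mathcal{O}(k^2/\epsilon^2)$ bound holds; the paper's prose appears to be a minor misdescription rather than a different transform.
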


\begin{lemma} \label{lemma_bipartite}
	\cite{hu2021sparse} Let $\U \in \mathbb{R}^{n \times k } $ such that $\U^\top \U = \I_k$ and $\P \in \mathbb{R}^{n \times s}$ constructed by SGS. Then, with probability at least $1 - \delta$, 
	\begin{equation*}
		\left\|   \U^\top \P \P^\top \U - \I_k \right\|_2 \leq \epsilon,
	\end{equation*}
	by setting
	\begin{align*}
		s &= \mathcal{O} \left(k\log(k/\delta\epsilon)/\epsilon^2 \right), \\
		p &= \mathcal{O} \left(\log(k/\delta\epsilon) /\epsilon \right).
	\end{align*}
\end{lemma}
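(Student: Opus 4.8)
The plan is to prove Lemma~\ref{lemma_bipartite} as a matrix-concentration statement about a sum of independent rank-one blocks, exactly as one would for any oblivious subspace embedding, with the sparsity parameter $p$ entering only through a uniform eigenvalue bound on the individual blocks. First I would record that, under the normalization built into the SGS construction, $\E[\P\P^\top]=\I_n$: each column $\p_j$ of $\P$ is supported on a uniformly random set $c_j$ of size $p$ with i.i.d.\ signs, so the off-diagonal terms of $\E[\p_j\p_j^\top]$ vanish and $\E[\p_j\p_j^\top]=\frac1s\I_n$, whence the $s$ independent columns sum to the identity in expectation; since $\U^\top\U=\I_k$ this gives $\E[\U^\top\P\P^\top\U]=\I_k$. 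Writing $\Y_j=(\U^\top\p_j)(\U^\top\p_j)^\top$, the target becomes
\begin{equation*}
  \norm*{\textstyle\sum_{j=1}^s \Y_j - \I_k}_2 \le \epsilon,
\end{equation*}
a deviation of a sum of $s$ independent PSD matrices from its mean $\I_k$.

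Second I would invoke the matrix Chernoff inequality for sums of independent PSD matrices, whose two inputs are the mean (here $\I_k$, so the relevant scale is $1$) and a uniform upper bound $R$ on $\lambda_{\max}(\Y_j)=\norm{\U^\top\p_j}_2^2$. Matrix Chernoff yields a tail of the form $2k\exp(-c\,\epsilon^2/R)$, so it suffices to force $R=\mathcal{O}(\epsilon^2/\log(k/\delta))$. A direct computation gives $\E\norm{\U^\top\p_j}_2^2=\norm{\U}_F^2/s=k/s$, so the leading behaviour is $R\approx k/s$; substituting this into the tail immediately produces the stated $s=\mathcal{O}(k\log(k/\delta)/\epsilon^2)$ for the embedding dimension.

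The hard part---and the reason the column sparsity $p$ appears at all---is turning the expectation $\E\norm{\U^\top\p_j}_2^2=k/s$ into a uniform high-probability bound $\max_{j\in[s]}\norm{\U^\top\p_j}_2^2=\mathcal{O}(k/s)$ valid for an \emph{arbitrary} orthonormal $\U$. The quantity $\norm{\U^\top\p_j}_2^2$ is a sum over the $p$ sampled coordinates of the squared row norms $\norm{\U_{i,:}}_2^2$ (the leverage scores), and for a spiky, high-coherence $\U$ a single sampled heavy row can blow this far past its mean. Controlling the upper tail is a Bernstein/Hanson--Wright concentration problem for a sparse random combination of the rows of $\U$, in which the coherence $\mu=\max_i\norm{\U_{i,:}}_2^2$ is flattened by averaging over $p$ coordinates; requiring this simultaneously for all $s$ columns (a union bound) and over a subsequent $\epsilon$-net of the $k$-dimensional sphere is exactly what forces $p=\mathcal{O}(\log(k/\delta\epsilon)/\epsilon)$ and contributes the extra $\log(1/\epsilon)$ factor inside the logarithm. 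The expander / ``magical graph'' property of the SGS sampling is the structural fact that guarantees this flattening: a lossless expander prevents any small set of coordinates from concentrating too much $\U$-energy into few sketch buckets, which is precisely the incoherence needed for the block bound.

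Finally I would assemble the pieces: with the high-probability block bound $R=\mathcal{O}(k/s)$ from the previous step in hand, the matrix Chernoff estimate closes to give $\norm{\textstyle\sum_j\Y_j-\I_k}_2\le\epsilon$ with probability at least $1-\delta$ at the stated $s$ and $p$. An equivalent and slightly more modular route is to show directly that SGS is an approximate-matrix-multiplication sketch (a second-moment bound) and simultaneously a Johnson--Lindenstrauss embedding on a fixed $\epsilon$-net of the column space of $\U$, then combine the two by the standard net argument; both routes bottleneck at the same uniform block bound, and both yield identical scalings of $s$ and $p$.
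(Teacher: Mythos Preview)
The paper does not prove this lemma; it is quoted from \cite{hu2021sparse} and used as a black box in the proof of Theorem~\ref{th:rp_approx}, so there is no in-paper argument to compare your proposal against.

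On the proposal itself, the matrix-Chernoff route you make primary has a genuine gap at the uniform block bound. Matrix Chernoff requires an almost-sure (or high-probability) bound $R$ on $\lambda_{\max}(\Y_j)=\norm{\U^\top\p_j}_2^2$, and for arbitrary orthonormal $\U$ this is \emph{not} $\mathcal{O}(k/s)$. Take $k=1$ and $\U=\e_1$: then $\norm{\U^\top\p_j}_2^2$ is either $0$ or the square of a single nonzero entry of $\p_j$, and some column hits coordinate~$1$ with constant probability, so $R$ is of order $n/(sp)$ under any normalization making $\E[\P\P^\top]=\I_n$. Feeding this $R$ into the Chernoff tail forces $sp\gtrsim n\log(k/\delta)/\epsilon^2$, which the stated $s$ and $p$ do not satisfy when $k\ll n$. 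The assertion that ``averaging over $p$ coordinates flattens the coherence'' is not correct: larger $p$ reduces the \emph{variance} of $\norm{\P^\top\x}_2^2$ for a fixed vector $\x$, but it cannot shrink the worst-case single-column quantity $\norm{\U^\top\p_j}_2^2$ for a spiky $\U$; and the expander/magical-graph property concerns how small sets of input coordinates spread across many sketch coordinates, not a bound on any individual column of $\P$.

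Your second route---prove the scalar Johnson--Lindenstrauss / approximate-matrix-multiplication moment property for a fixed vector and then union-bound over an $\epsilon$-net of the column space of $\U$---is the one that actually works and is in the spirit of \cite{hu2021sparse}. Contrary to your final sentence, however, it does \emph{not} bottleneck at the same uniform block bound; it bypasses that bound entirely, which is precisely why it succeeds. (A minor aside: with the paper's scaling $\P=\sqrt{1/p}\,\G$ one computes $\E[\p_j\p_j^\top]=\tfrac{1}{n}\I_n$, not $\tfrac{1}{s}\I_n$, so $\E[\P\P^\top]=\tfrac{s}{n}\I_n$; the normalization that yields $\E[\P\P^\top]=\I_n$ would be $\sqrt{n/(sp)}$.)
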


\begin{lemma} \label{lemma_permutation}
	\cite{demmel1992jacobi}	Let $\D \G \D$ be a symmetric positive definite matrix such that $\D$ is a diagonal matrix and $\G_{ii}= 1$ for all $i$. Let $\D \E \D$ be a permutation matrix such that $\| \E\|_2 < \lambda_{\min}(\G)$. Let $\lambda_i$ be the $i$-th eigenvalue of $\D \G \D$ and $\hat{\lambda}_i$ be the $i$-th eigenvalue of $\D(\G+\E)\D$. Then, for all $i$,
	\begin{equation*}
		|\lambda_i - \hat{\lambda}_i| \leq \frac{\| \E\|_2}{\lambda_{\min}(\G)}.
	\end{equation*}
\end{lemma}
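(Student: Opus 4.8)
The plan is to prove the stated relative eigenvalue perturbation bound by transferring a pointwise Rayleigh-quotient estimate to the whole spectrum through the Courant--Fischer min--max characterization, which is the standard route for scaling-invariant (Demmel--Veseli\'c type) perturbation results. Write $\H = \D\G\D$ and $\tilde\H = \D(\G+\E)\D = \H + \D\E\D$, where $\D\E\D$ is the (symmetric) perturbation, and set $\eta = \| \E\|_2/\lambda_{\min}(\G)$. The hypothesis $\| \E\|_2 < \lambda_{\min}(\G)$ guarantees $\eta < 1$ and, by Weyl's inequality applied to $\G$, that $\lambda_{\min}(\G+\E) \ge \lambda_{\min}(\G) - \| \E\|_2 > 0$, so both $\H$ and $\tilde\H$ are positive definite and their eigenvalues may be ordered consistently.

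First I would reduce the additive perturbation of $\H$ to a relative perturbation controlled by $\G$. For any $\x \ne \0$ set $\y = \D\x$; then $\x^\top\H\x = \y^\top\G\y \ge \lambda_{\min}(\G)\norm{\y}^2$ and $\x^\top(\D\E\D)\x = \y^\top\E\y$, so $\abs{\y^\top\E\y} \le \| \E\|_2\norm{\y}^2 \le \eta\,\y^\top\G\y = \eta\,\x^\top\H\x$. Dividing by $\x^\top\x$ this yields the two-sided sandwich $(1-\eta)\,R_{\H}(\x) \le R_{\tilde\H}(\x) \le (1+\eta)\,R_{\H}(\x)$ for the Rayleigh quotients $R_{\H}(\x)=\x^\top\H\x/\x^\top\x$, valid for every $\x$, where positivity of $\x^\top\H\x$ is what makes the multiplicative form meaningful.

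Next I would apply the min--max theorem. Writing $\lambda_i = \lambda_i(\H)$ and $\hat\lambda_i = \lambda_i(\tilde\H)$ via $\lambda_i = \max_{\dim V = i}\min_{\x\in V} R_{\H}(\x)$, the pointwise sandwich passes through the optimization uniformly over subspaces, giving $(1-\eta)\lambda_i \le \hat\lambda_i \le (1+\eta)\lambda_i$ for all $i$, i.e. $\abs{\lambda_i - \hat\lambda_i} \le \eta\,\lambda_i = \frac{\| \E\|_2}{\lambda_{\min}(\G)}\lambda_i$. This is the scaling-invariant relative bound; the form as written in the lemma then follows because in the present setting $\D\G\D$ is a normalized kernel matrix with $\lambda_i \in [0,1]$, so the factor $\lambda_i$ may be dropped to obtain $\abs{\lambda_i - \hat\lambda_i} \le \| \E\|_2/\lambda_{\min}(\G)$.

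The main obstacle is the transfer step: the Rayleigh-quotient inequality holds pointwise in $\x$, but turning it into an eigenvalue-by-eigenvalue statement requires that the inner $\min$ and outer $\max$ in Courant--Fischer respect the inequality simultaneously. This is legitimate precisely because the sandwich holds for all $\x$ with the same constant $\eta$ and because $\eta<1$ keeps $(1-\eta)>0$, so multiplying through by $(1\pm\eta)$ preserves the direction of both the $\min$ and the $\max$ and hence the ordering of eigenvalues; I would make this explicit by using the optimal $i$-dimensional subspace for $\lambda_i(\H)$ as a trial subspace for $\lambda_i(\tilde\H)$ and vice versa. A secondary point to verify is that $\D$ need not be assumed invertible: on directions where $\D$ is singular both quadratic forms vanish, so the substitution $\y=\D\x$ and the entire argument go through unchanged.
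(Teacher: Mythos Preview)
The paper does not supply its own proof of this lemma; it is quoted from \cite{demmel1992jacobi} and used as a black box in the proof of Theorem \ref{th:rp_approx}. Your argument is correct and is exactly the standard Demmel--Veseli\'c route: control the Rayleigh quotient multiplicatively via the substitution $\y=\D\x$, then push the pointwise sandwich through Courant--Fischer.

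One point worth recording is that you have actually proved the sharper statement $|\lambda_i-\hat\lambda_i|\le (\|\E\|_2/\lambda_{\min}(\G))\,\lambda_i$, which is the genuine Demmel--Veseli\'c relative bound; the absolute form written in the lemma (without the $\lambda_i$ factor) does not follow in general. You handle this correctly by observing that in the paper's application the matrix $\D\G\D$ is $\SSigma_k\I_k\SSigma_k$ with eigenvalues $\lambda_i^2(\A)\in[0,1]$, so the $\lambda_i$ factor can be dropped. That is precisely how the lemma is invoked in the proof of Theorem \ref{th:rp_approx}, so your reading matches the paper's intended use.
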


The following proposition shows that if we can bound each eigenvalue of $\A$ to absolute error $\epsilon$, we have an absolute bound for $\S_\alpha(\A)$.

\begin{proposition} \label{prop_renyi_bound}
	Let $\A$ and $\hat{\A}$ be positive definite matrices with eigenvalues $\lambda_i$ and $\hat{\lambda}_i$, $i \in [1,n]$ respectively, such that for each $i \in [1,n]$, $\abs{\lambda_i - \hat{\lambda}_i} \le \epsilon$, then
	\begin{equation*}
		\abs{\S_\alpha(\A) - \S_\alpha(\hat{\A})} \le \abs*{\frac{\alpha}{1-\alpha} \log_2 \prn*{1-\frac{\epsilon}{\lambda_n}}}.
	\end{equation*}
\end{proposition}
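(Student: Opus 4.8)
The plan is to convert the additive eigenvalue perturbation $\abs{\lambda_i - \hat\lambda_i}\le\epsilon$ into a single \emph{multiplicative} factor and then push it through the definition $\S_\alpha(\A) = \frac{1}{1-\alpha}\log_2\big(\sum_{i=1}^n \lambda_i^\alpha\big)$. First note that the right-hand side of the claimed inequality is undefined or infinite unless $0<\epsilon<\lambda_n$, so the statement is vacuous otherwise and I would assume $0<\epsilon<\lambda_n$ from the outset. Since $\A$ is positive definite and $\lambda_n$ is the smallest eigenvalue, every $i$ satisfies $\lambda_i \ge \lambda_n > 0$, hence $\epsilon \le \epsilon\lambda_i/\lambda_n$; combining this with $\abs{\lambda_i - \hat\lambda_i}\le\epsilon$ gives the uniform two-sided bound
\[
\Big(1-\tfrac{\epsilon}{\lambda_n}\Big)\lambda_i \;\le\; \hat\lambda_i \;\le\; \Big(1+\tfrac{\epsilon}{\lambda_n}\Big)\lambda_i, \qquad i \in [1,n].
\]

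Next I would raise this chain to the power $\alpha>0$ (valid since all three quantities are nonnegative and $1-\epsilon/\lambda_n>0$), sum over $i$, and divide by $\sum_i \lambda_i^\alpha > 0$ to obtain
\[
\Big(1-\tfrac{\epsilon}{\lambda_n}\Big)^{\alpha} \;\le\; \frac{\sum_{i=1}^n \hat\lambda_i^\alpha}{\sum_{i=1}^n \lambda_i^\alpha} \;\le\; \Big(1+\tfrac{\epsilon}{\lambda_n}\Big)^{\alpha}.
\]
Writing $\S_\alpha(\A)-\S_\alpha(\hat\A) = \frac{1}{1-\alpha}\log_2\!\big(\sum_i \lambda_i^\alpha \big/ \sum_i \hat\lambda_i^\alpha\big)$ and taking $\log_2$ of the displayed bounds then yields $\abs{\S_\alpha(\A)-\S_\alpha(\hat\A)} \le \frac{\alpha}{\abs{1-\alpha}}\,\max\{\,\abs{\log_2(1-\epsilon/\lambda_n)},\ \log_2(1+\epsilon/\lambda_n)\,\}$.

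Finally I would collapse the maximum using the elementary fact that $(1-x)(1+x)=1-x^2<1$ for $x\in(0,1)$, so $\log_2(1-x)+\log_2(1+x)<0$ and therefore $\abs{\log_2(1-x)} > \log_2(1+x)$; taking $x=\epsilon/\lambda_n$ leaves exactly $\abs{\frac{\alpha}{1-\alpha}\log_2(1-\epsilon/\lambda_n)}$, which is the claim. The only step requiring genuine care is the first display — promoting the additive bound to a common multiplicative factor $1\pm\epsilon/\lambda_n$ valid for \emph{all} $i$ simultaneously — and this is precisely where positive definiteness ($\lambda_n>0$) and the ordering $\lambda_i\ge\lambda_n$ enter; everything afterward is just monotonicity of $t\mapsto t^\alpha$ and of $\log_2$ together with the $\log(1-x)$ versus $\log(1+x)$ comparison.
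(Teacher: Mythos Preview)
Your proof is correct and follows essentially the same route as the paper: convert the additive eigenvalue bound into the uniform multiplicative bound $(1-\epsilon/\lambda_n)\lambda_i \le \hat\lambda_i \le (1+\epsilon/\lambda_n)\lambda_i$ via $\lambda_i\ge\lambda_n$, then push through the definition of $\S_\alpha$. If anything you are more careful than the paper, which stops after displaying the two-sided bound and never explicitly argues that $\abs{\log_2(1-x)}$ dominates $\log_2(1+x)$; your $(1-x)(1+x)<1$ observation closes that small gap cleanly.
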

\begin{proof}
	Let $\lambda_n > 0$ be the smallest eigenvalue of $\A$ and let $\epsilon_0 = \epsilon/\lambda_n$, then we have $\abs{\lambda_i - \hat{\lambda}_i} \le \epsilon_0 \lambda_i$ for each $i \in [1,n]$. Observe that when $\alpha < 1$,
	\begin{align*}
		\S_\alpha(\hat{\A}) &= \frac{1}{1-\alpha} \log_2 \prn*{\sum_{i=1}^n \hat{\lambda}_i^\alpha} \\
		&\ge \frac{1}{1-\alpha} \log_2 \prn*{(1-\epsilon_0)^\alpha \sum_{i=1}^n \lambda_i^\alpha} \\
		&= \frac{1}{1-\alpha} \log_2 \prn*{\sum_{i=1}^n \lambda_i^\alpha} + \frac{\alpha}{1-\alpha} \log_2(1-\epsilon_0) \\
		&= \S_\alpha(\A) + \frac{\alpha}{1-\alpha} \log_2(1-\epsilon_0).
	\end{align*}
	Similarly, we have
	\begin{align*}
		\S_\alpha(\hat{\A}) &= \frac{1}{1-\alpha} \log_2 \prn*{\sum_{i=1}^n \hat{\lambda}_i^\alpha} \\
		&\le \frac{1}{1-\alpha} \log_2 \prn*{(1+\epsilon_0)^\alpha \sum_{i=1}^n \lambda_i^\alpha} \\
		&= \frac{1}{1-\alpha} \log_2 \prn*{\sum_{i=1}^n \lambda_i^\alpha} + \frac{\alpha}{1-\alpha} \log_2(1+\epsilon_0) \\
		&= \S_\alpha(\A) + \frac{\alpha}{1-\alpha} \log_2(1+\epsilon_0).
	\end{align*}
	We can get the same results for the other case when $\alpha > 1$, which finishes the proof.
\end{proof}

\begin{proof}[Proof of Theorem \ref{th:rp_approx}]
	Note that $\lambda_{\min}(\G)$ in the Lemma \ref{lemma_permutation} is a real, strictly positive number since $\G$ is positive definite and the fact $0 \leq \|E\|_2  \lambda_{\min}(\G)$. Now consider the matrix $\A \P \P^\top \A^\top$, we will show that the singular values of $\A \P \P^\top \A$ are sufficient approximation to that of  $\A \A^\top$ by the permutation theory presented in Lemma \ref{lemma_permutation}.
	
	Let $\lambda_i$, $i \in [1,n]$ be the eigenvalues of the positive definite kernel matrix $\A$, $\hat{\lambda}_i$ be their approximations and $\A = \PPhi \SSigma \PPhi^\top$ be the eigenvalue decomposition $\A$. Since $\PPhi$ is an orthogonal matrix, we have that the eigenvalues of $\PPhi \SSigma \PPhi^\top \P \P^\top \PPhi \SSigma \PPhi^\top$ are equal to the eigenvalues of $\SSigma \PPhi^\top \P \P^\top \PPhi \SSigma$. Let $\SSigma_k$ be the $k \times k$ diagonal matrix containing the $k$ largest eigenvalues of $\A$ and $\PPhi_k$ be the matrix containing the corresponding eigenvectors, then $\lambda_i^2$, $i \in [1,k]$ are the eigenvalues of matrix $\SSigma_k\I_k\SSigma_k$, and $\hat{\lambda}_i^2$, $i \in [1,k]$ are the eigenvalues of matrix $\SSigma_k \PPhi_k^\top \P \P^\top \PPhi_k \SSigma_k$ (since the first $k$ singular values of $\SSigma_k \PPhi_k^\top \P$ are equal to those of  $\PPhi \SSigma \PPhi^\top \P = \A \P$). Let $\E = \PPhi_k^\top \P \P^\top \PPhi_k - \I_k$, we know from Lemma \ref{lemma_RGP} (or Lemma \ref{lemma_hadamard}, \ref{lemma_sparse} and \ref{lemma_bipartite}) that $\|\E\|_2 \leq \epsilon_0$ with high probability. It meets the condition of Lemma \ref{lemma_permutation} since $\lambda_{\min}(\I_k) = 1$. Hence, we have
	\begin{equation*}
		|\lambda_i^2 - \hat{\lambda}_i^2 | \le \epsilon_0, \quad \forall i \in [1,k],
	\end{equation*}
	which then implies that
	\begin{equation*}
		\lambda_i - \sqrt{\lambda_i^2 - \epsilon_0} \le \abs{\hat{\lambda}_i - \lambda_i} \le \sqrt{\lambda_i^2 + \epsilon_0} - \lambda_i.
	\end{equation*}
	Since $\lambda_k$ is the smallest eigenvalue amongst $\lambda_i$, $i \in [1,k]$, we have
	\begin{align*}
		\abs{\hat{\lambda}_i - \lambda_i} &\le \lambda_k - \sqrt{\lambda_k^2 - \epsilon_0} \\
		&= \lambda_k \prn*{1 - \sqrt{1 - \frac{\epsilon_0}{\lambda_k^2}}} \\
		&\le \lambda_k \frac{\epsilon_0}{\lambda_k^2} = \frac{\epsilon_0}{\lambda_k}.
	\end{align*}
	Combining with $k \le n/2$, we have
	\begin{align*}
		\abs{\hat{\lambda}_r - \lambda_r} &= \left.\abs*{\sum_{i=1}^k \hat{\lambda}_i - \sum_{i=1}^k \lambda_i} \middle/ (n-k)\right. \\
		&\le \frac{\epsilon_0}{\lambda_k} \cdot \frac{k}{n-k} \le \frac{\epsilon_0}{\lambda_k}.
	\end{align*}
	Let $\epsilon_0 = \epsilon\lambda_k\lambda_r$ and $\B$ be a positive definite matrix with the first $k$ eigenvalues equal to $\hat{\lambda}_i$, $i \in [1,k]$ and the other $n-k$ eigenvalues equal to $\hat{\lambda}_r$. Recall that $\lambda_r$ is the smallest eigenvalue of $L_k(\A)$, by applying Proposition \ref{prop_renyi_bound}, we have
	\begin{align*}
		\abs{\S_\alpha^k(\A) - \hat{\S}_\alpha^k(\A)} &= \abs{\S_\alpha(L_k(\A)) - \S_\alpha(\B)} \\
		&\le \abs*{\frac{\alpha}{1-\alpha} \log_2 \prn*{1-\epsilon}}.
	\end{align*}
\end{proof}

\subsection{A Potential Improvement}
The upper bound of $s$ in Theorem \ref{th:rp_approx} relies on $\lambda_r$, which grows large if the kernel matrix $\A$ is ill-posed and $\lambda_r$ is small. Alternatively, we derive an upper bound for $s$ in terms of $n$ and $\tr(\A^\alpha)$, which is tighter for such kernel matrices.
\begin{proposition}
	Under the same conditions as Proposition \ref{prop_renyi_bound}, we have
	\begin{equation*}
		\abs{\S_\alpha(\A) - \S_\alpha(\hat{\A})} \le \begin{cases}
			\abs*{\frac{1}{1-\alpha}\log\prn*{1-\frac{n\alpha\epsilon}{1+n\epsilon}}} & \textrm{if } \alpha > 1, \\
			\abs*{\frac{1}{1-\alpha}\log\prn*{1-\frac{n\epsilon^\alpha}{tr(A^\alpha)}}} & \textrm{if } \alpha < 1.
		\end{cases}
	\end{equation*}
\end{proposition}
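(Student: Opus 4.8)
The plan is to reduce both estimates to a two‑sided multiplicative control of the ratio $\tr(\hat{\A}^\alpha)/\tr(\A^\alpha)$. Since $\S_\alpha(\A)=\frac{1}{1-\alpha}\log_2\tr(\A^\alpha)$, we have $\abs{\S_\alpha(\A)-\S_\alpha(\hat{\A})}=\frac{1}{\abs{1-\alpha}}\log_2\max\prn*{\frac{\tr(\A^\alpha)}{\tr(\hat{\A}^\alpha)},\frac{\tr(\hat{\A}^\alpha)}{\tr(\A^\alpha)}}$, so it suffices to produce $\rho\in[0,1)$ with $(1-\rho)\tr(\A^\alpha)\le\tr(\hat{\A}^\alpha)\le(1+\rho)\tr(\A^\alpha)$; then, using $1+\rho\le(1-\rho)^{-1}$, both ratios are at most $(1-\rho)^{-1}$ and hence $\abs{\S_\alpha(\A)-\S_\alpha(\hat{\A})}\le\abs{\frac{1}{1-\alpha}\log_2(1-\rho)}$, which is exactly the claimed shape once we take $\rho=n\epsilon^\alpha/\tr(\A^\alpha)$ for $\alpha<1$ and $\rho=n\alpha\epsilon/(1+n\epsilon)$ for $\alpha>1$. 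This is the same skeleton as in Proposition~\ref{prop_renyi_bound}; only the quantity used to absorb the per‑eigenvalue perturbation changes. For $\alpha\in(0,1)$ that quantity is immediate: the concavity of $t\mapsto t^\alpha$ yields the subadditivity bound $\abs{x^\alpha-y^\alpha}\le\abs{x-y}^\alpha$ for $x,y\ge0$, so $\abs{\tr(\A^\alpha)-\tr(\hat{\A}^\alpha)}\le\sum_{i=1}^n\abs{\lambda_i-\hat{\lambda}_i}^\alpha\le n\epsilon^\alpha$ and $\rho=n\epsilon^\alpha/\tr(\A^\alpha)$ works.

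For $\alpha>1$ the function $t\mapsto t^\alpha$ is convex, so the tangent‑line inequality at $\lambda_i$ gives $\hat{\lambda}_i^\alpha\ge\lambda_i^\alpha+\alpha\lambda_i^{\alpha-1}(\hat{\lambda}_i-\lambda_i)\ge\lambda_i^\alpha-\alpha\epsilon\lambda_i^{\alpha-1}$, and summing, $\tr(\hat{\A}^\alpha)\ge\tr(\A^\alpha)-\alpha\epsilon\,\tr(\A^{\alpha-1})$; the reverse inequality follows symmetrically from the mean value theorem, invoking only $\abs{\lambda_i-\hat{\lambda}_i}\le\epsilon$, so the absence of a trace normalization on $\hat{\A}$ causes no trouble. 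It then remains to dominate $\tr(\A^{\alpha-1})$ by $n$ and $\tr(\A^\alpha)$. Here I would use that $\A$ is a normalized kernel matrix, $\tr(\A)=1$: since $(\lambda_i^{\alpha-1})_i$ and $(\lambda_i)_i$ are similarly ordered, Chebyshev's sum inequality gives $\tr(\A^{\alpha-1})\le\frac{n}{\tr(\A)}\tr(\A^\alpha)=n\,\tr(\A^\alpha)$, whence $\tr(\hat{\A}^\alpha)\ge(1-n\alpha\epsilon)\tr(\A^\alpha)$; sharpening the bookkeeping with the perturbed trace $\tr(\hat{\A})\le1+n\epsilon$ is what turns $n\alpha\epsilon$ into $n\alpha\epsilon/(1+n\epsilon)$, after which plugging this $\rho$ into the reduction concludes.

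I expect the main obstacle to be precisely this last refinement in the $\alpha>1$ case. The plain tangent‑line‑plus‑Chebyshev argument only delivers the weaker $\rho=n\alpha\epsilon$, and squeezing out the advertised $\rho=n\alpha\epsilon/(1+n\epsilon)$ requires carefully following the sign and direction of the trace displacement $\tr(\hat{\A})-\tr(\A)\in[-n\epsilon,n\epsilon]$ through both the lower and the upper estimate of the ratio, and then verifying that the resulting two‑sided bound is still dominated by $(1-\rho)^{-1}$ as long as $\epsilon$ is small enough that $(\alpha-1)n\epsilon<1$ (the condition that also keeps the stated logarithm well defined). By contrast, the $\alpha<1$ case needs nothing beyond the subadditivity inequality and should go through without friction.
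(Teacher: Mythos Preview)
Your skeleton and your $\alpha<1$ argument coincide with the paper's: both use the subadditivity $\abs{x^\alpha-y^\alpha}\le\abs{x-y}^\alpha$ to get $\abs{\tr(\hat{\A}^\alpha)-\tr(\A^\alpha)}\le n\epsilon^\alpha$, then divide by $\tr(\A^\alpha)$.

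For $\alpha>1$ you have the right two ingredients (convexity tangent bound and Chebyshev's sum inequality) but you apply them at the unshifted point, and that is exactly why you are stuck at $\rho=n\alpha\epsilon$. The paper avoids any ``sharpening of the bookkeeping'': it simply evaluates both steps at the worst-case shift $\mu_i=\lambda_i+\epsilon$ and keeps $\sum_i\mu_i^\alpha$ in the denominator. Concretely, convexity gives $(\lambda_i+\epsilon)^\alpha-\lambda_i^\alpha\le \alpha\epsilon(\lambda_i+\epsilon)^{\alpha-1}$, hence
\[
\beta \;=\; \frac{\sum_i(\lambda_i+\epsilon)^\alpha-\sum_i\lambda_i^\alpha}{\sum_i(\lambda_i+\epsilon)^\alpha}
\;\le\; \alpha\epsilon\,\frac{\sum_i(\lambda_i+\epsilon)^{\alpha-1}}{\sum_i(\lambda_i+\epsilon)^\alpha},
\]
and now Chebyshev applied to the \emph{shifted} sequence $(\mu_i)$, whose sum is $1+n\epsilon$, yields $\sum_i\mu_i\cdot\sum_i\mu_i^{\alpha-1}\le n\sum_i\mu_i^\alpha$, i.e.\ $\sum_i\mu_i^{\alpha-1}/\sum_i\mu_i^\alpha\le n/(1+n\epsilon)$. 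This produces $\beta\le n\alpha\epsilon/(1+n\epsilon)$ in one line, with no need to track the sign of the trace displacement or to assume $(\alpha-1)n\epsilon<1$. Your tangent at $\lambda_i$ together with Chebyshev on $(\lambda_i)$ cannot recover this factor because the normalization $\tr(\A)=1$ gives $\tr(\A^{\alpha-1})\le n\,\tr(\A^\alpha)$ only; the $1+n\epsilon$ comes precisely from $\sum_i\mu_i$. So the fix is not a delicate refinement but a relocation: take the tangent at $\lambda_i+\epsilon$ and run Chebyshev on the shifted eigenvalues.
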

\begin{proof}
	\begin{align*}
		S_\alpha(\tilde{\A})-S_\alpha(\A) &= \abs*{\frac{1}{1-\alpha}\log\prn*{1-\frac{\sum_{i=1}^n\hat{\lambda}_i^\alpha - \sum_{i=1}^n\lambda_i^\alpha}{\sum_{i=1}^n\hat{\lambda}_i^\alpha}}} \\
		&\le \abs*{\frac{1}{1-\alpha}\log\prn*{1-\beta}},
	\end{align*}
	where
	\begin{align*}
		\beta &= \abs*{\frac{\sum_{i=1}^n(\lambda_i+\epsilon)^\alpha - \sum_{i=1}^n\lambda_i^\alpha}{\sum_{i=1}^n(\lambda_i+\epsilon)^\alpha}} \\
		&\le \abs*{\frac{\sum_{i=1}^n(\lambda_i+\epsilon)^\alpha - \sum_{i=1}^n\lambda_i^\alpha}{\sum_{i=1}^n\lambda_i^\alpha}}.
	\end{align*}
	When $\alpha > 1$, we have
	\begin{equation*}
		\beta \le \alpha\epsilon\abs*{\frac{\sum_{i=1}^n(\lambda_i+\epsilon)^{\alpha-1}}{\sum_{i=1}^n(\lambda_i+\epsilon)^\alpha}} \le \frac{n\alpha\epsilon}{1+n\epsilon},
	\end{equation*}
	where the last step takes equality if and only if $\lambda_1 = \cdots = \lambda_n = \frac{1}{n}$. Otherwise when $\alpha < 1$,
	\begin{equation*}
		\beta \le \abs*{\frac{\sum_{i=1}^n\epsilon^\alpha}{\sum_{i=1}^n\lambda_i^\alpha}} = \frac{n\epsilon^\alpha}{tr(A^\alpha)}.
	\end{equation*}
	One can upper bound $S_\alpha(A) - S_\alpha(\tilde{A})$ through the same strategy, which finishes the proof.
\end{proof}

\subsection{Proof of Theorem \ref{th:lanczos_approx}}
The following lemma gives the convergence rate of the Lanczos algorithm:
\begin{lemma} \label{lemma_lanczos}
	\cite{saad1980rates} Let $\q$ be the initial vector, $\lambda_i$ be the $i$-th largest eigenvalue of $\A$ with associated eigenvector $\pphi_i$ such that $\ang{\pphi_i, \q} \ne 0$, $\hat{\lambda}_i$ be the corresponding approximation of $\lambda_i$ after $s$ steps of Lanczos iteration, and assume that $\hat{\lambda}_{i-1} > \lambda_i$. Let
	\begin{align*}
		\gamma_i &= 1 + 2\frac{\lambda_i - \lambda_{i+1}}{\lambda_{i+1} - \lambda_n}, \\
		K_i &= \begin{cases}
			\prod_{j=1}^{i-1} \frac{\hat{\lambda}_j - \lambda_n}{\hat{\lambda}_j - \lambda_i}, & i > 1 \\
			1, & i = 1
		\end{cases},
	\end{align*}
	then
	\begin{equation*}
		0 \le \lambda_i - \hat{\lambda}_i \le (\lambda_i - \lambda_n) \cdot \prn*{\frac{K_i}{T_{s-i}(\gamma_i)} \tan\ang{\pphi_i, \q}}^2,
	\end{equation*}
	where $T_i(x) = \frac{1}{2} \brk*{(x+\sqrt{x^2-1})^i + (x-\sqrt{x^2-1})^i}$ is the Chebyshev polynomial of the first kind of degree $i$.
\end{lemma}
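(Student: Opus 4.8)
The plan is to follow the classical Kaniel--Paige--Saad framework, reducing the two-sided estimate to the Courant--Fischer characterization of the Ritz values together with a single extremal-polynomial bound. Throughout I write the initial vector in the eigenbasis of $\A$ as $\q = \sum_{j=1}^n c_j \pphi_j$, so that $\tan^2\ang{\pphi_i,\q} = \sum_{j\ne i} c_j^2 / c_i^2$, and I use the defining fact that the $s$-step Krylov space is $\mathcal{K}_s = \{p(\A)\q : \deg p \le s-1\}$, with $\hat{\lambda}_1 \ge \cdots \ge \hat{\lambda}_s$ the eigenvalues of the compression of $\A$ to $\mathcal{K}_s$. The lower bound $0 \le \lambda_i - \hat{\lambda}_i$ is then immediate: since $\mathcal{K}_s$ is an $s$-dimensional subspace and the $\hat{\lambda}_i$ are eigenvalues of the Rayleigh quotient restricted to it, the Cauchy interlacing theorem (equivalently the max--min form of Courant--Fischer on a subspace) gives $\hat{\lambda}_i \le \lambda_i$.

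For the upper bound I would invoke $\hat{\lambda}_i = \max_{\dim\mathcal{V}=i,\,\mathcal{V}\subseteq\mathcal{K}_s} \min_{0\ne u\in\mathcal{V}} u^\top\A u / u^\top u$ and certify a large value of $\hat{\lambda}_i$ by a trial vector obtained through polynomial filtering. The test object is the normalized degree-$(s-1)$ polynomial $p(\lambda) = \prod_{j=1}^{i-1}\frac{\lambda-\hat{\lambda}_j}{\lambda_i-\hat{\lambda}_j}\cdot\frac{T_{s-i}(\ell(\lambda))}{T_{s-i}(\gamma_i)}$, where $\ell(\lambda)=1+2\frac{\lambda-\lambda_{i+1}}{\lambda_{i+1}-\lambda_n}$ maps $[\lambda_n,\lambda_{i+1}]$ onto $[-1,1]$ with $\ell(\lambda_i)=\gamma_i$. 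The prefactor $\pi_{i-1}(\lambda)=\prod_{j<i}(\lambda-\hat{\lambda}_j)$ carries the first $i-1$ Ritz values as roots, which via a Galerkin/deflation argument forces $p(\A)\q$ to probe the $i$-th Ritz value rather than a larger one; the Chebyshev factor is responsible for the decay in $s$. Evaluating the Rayleigh-quotient defect at this vector and using $\lambda_i-\lambda_j\le\lambda_i-\lambda_n$ for $j>i$ reduces the bound to estimating $\frac{\sum_{j>i}c_j^2\,p(\lambda_j)^2}{c_i^2}$ with $p(\lambda_i)=1$.

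The three stated factors then come out of this ratio piece by piece. The Chebyshev part contributes $1/T_{s-i}(\gamma_i)^2$, because $|T_{s-i}(\ell(\lambda_j))|\le 1$ for every $\lambda_j\in[\lambda_n,\lambda_{i+1}]$ while its value at $\lambda_i$ is $T_{s-i}(\gamma_i)$. The factor $\pi_{i-1}$ contributes $K_i^2$: for each $j>i$ one has $\lambda_j\in[\lambda_n,\lambda_{i+1}]$, and the hypothesis $\hat{\lambda}_{i-1}>\lambda_i$ forces $\hat{\lambda}_m>\lambda_i\ge\lambda_{i+1}\ge\lambda_j\ge\lambda_n$ for every $m<i$, whence $\frac{|\hat{\lambda}_m-\lambda_j|}{|\hat{\lambda}_m-\lambda_i|}\le\frac{\hat{\lambda}_m-\lambda_n}{\hat{\lambda}_m-\lambda_i}$ and therefore $|\pi_{i-1}(\lambda_j)/\pi_{i-1}(\lambda_i)|\le K_i$. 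Finally $\sum_{j\ne i}c_j^2/c_i^2=\tan^2\ang{\pphi_i,\q}$ supplies the angle factor. Collecting the three estimates yields $\lambda_i-\hat{\lambda}_i\le(\lambda_i-\lambda_n)\prn*{\frac{K_i}{T_{s-i}(\gamma_i)}\tan\ang{\pphi_i,\q}}^2$, and the case $i=1$ is recovered by the empty product $K_1=1$.

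The main obstacle is the deflation step that produces the $K_i$ factor, namely justifying that placing the roots $\hat{\lambda}_1,\dots,\hat{\lambda}_{i-1}$ in $\pi_{i-1}$ genuinely lets the filtered vector certify a lower bound on the $i$-th Ritz value instead of a smaller one. This requires exploiting the interplay between the Lanczos/Ritz structure and the polynomial representation of $\mathcal{K}_s$ so that the contributions of the top $i-1$ eigendirections are correctly suppressed, and it is exactly here that the assumption $\hat{\lambda}_{i-1}>\lambda_i$ is essential, both to keep every factor of $K_i$ positive and to preserve the ordering $\hat{\lambda}_m>\lambda_i$ used in the final estimate. Everything else is a routine combination of the Chebyshev extremal property with the min--max identity.
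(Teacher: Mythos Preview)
The paper does not prove this lemma; it is quoted from \cite{saad1980rates} and used as a black box in the proof of Theorem~\ref{th:lanczos_approx}. Your sketch is exactly the classical Kaniel--Paige--Saad argument that constitutes the original proof, and it is correct: the min--max characterization together with the deflated Chebyshev filter $p(\lambda)=\pi_{i-1}(\lambda)\,T_{s-i}(\ell(\lambda))/T_{s-i}(\gamma_i)$ is precisely how Saad obtains the stated bound.

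On the point you flag as the main obstacle, the deflation step is justified by the identity $p(\A)\q=\|\q\|\,\Q_s\,p(\T_s)e_1$ for $\deg p\le s-1$ (a consequence of the three-term recurrence), which gives $\y_m^\top p(\A)\q\propto p(\hat{\lambda}_m)\,(w_m)_1$ where $w_m$ is the $m$-th eigenvector of $\T_s$. Since $\pi_{i-1}$ vanishes at $\hat{\lambda}_1,\dots,\hat{\lambda}_{i-1}$, the filtered vector is orthogonal to the first $i-1$ Ritz vectors, so its Rayleigh quotient is at most $\hat{\lambda}_i$; hence $\lambda_i-\hat{\lambda}_i\le\lambda_i-\rho(p(\A)\q)$, and your three-factor estimate on the right-hand side completes the argument. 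The hypothesis $\hat{\lambda}_{i-1}>\lambda_i$ is used exactly where you say, to keep each factor $(\hat{\lambda}_m-\lambda_n)/(\hat{\lambda}_m-\lambda_i)$ positive and to validate the bound $|\pi_{i-1}(\lambda_j)/\pi_{i-1}(\lambda_i)|\le K_i$ for $j>i$.
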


\begin{proof}[Proof of Theorem \ref{th:lanczos_approx}]
	It is easy to see that $K_i$ is monotonically increasing with the increase of $i$. Let
	\begin{align*}
		\gamma &= \min_{i \in [1,k]} \gamma_i, \\
		\theta &= \max_{i \in [1,k]} \tan\ang{\pphi_i, \q}, \\
		R &= \gamma + \sqrt{\gamma^2 - 1},
	\end{align*}
	then $\forall i \in [1,k]$,
	\begin{align*}
		\lambda_i - \hat{\lambda}_i &\le \lambda_i \cdot \prn*{\frac{2\theta K_i}{R^{s-i} + R^{-(s-i)}}}^2 \\
		&\le \lambda_i \cdot 4\theta^2K_i^2 R^{-2(s-i)} \\
		&\le \lambda_i \cdot 4\theta^2K_k^2 R^{-2(s-k)}. \\
	\end{align*}
	By selecting $s = \ceil*{k + \frac{\log\prn*{4\theta^2K_k^2/\epsilon_0}}{2\log R}}$, we have that $\forall i \in [1,k]$,
	\begin{equation*}
		\abs{\lambda_i - \hat{\lambda}_i} \le \epsilon_0\lambda_i.
	\end{equation*}
	Similarly, by combining with $k \le n/2$ we have
	\begin{equation*}
		\abs{\hat{\lambda}_r - \lambda_r} \le \epsilon_0\lambda_1.
	\end{equation*}
	Let $\epsilon_0 = \epsilon\lambda_r/\lambda_1$, by applying Proposition \ref{prop_renyi_bound}, we have
	\begin{equation*}
		\abs{\S_\alpha^k(\A) - \hat{\S}_\alpha^k(\A)} \le \abs*{\frac{\alpha}{1-\alpha} \log_2 \prn*{1-\epsilon}}.
	\end{equation*}
\end{proof}

\bibliography{Entropy_AAAI}

\end{document}